\let\csname ver@natbib.sty\endcsname\relax
\let\citet\relax
\let\citep\relax
\newcommand{\multirowvtext}[2]{\multirow{#1}{*}{\begin{tabular}{@{}c@{}}\rotatebox[origin=c]{90}{#2}\end{tabular}}} %
\renewcommand{\newline}{\mbox{}\\}
\newcommand{\s}{\vec{s}}
\newcommand{\x}{\vec{x}}
\newcommand{\y}{\vec{y}}
\newcommand{\W}{\vec{W}}
\newcommand{\U}{\vec{U}}
\newtheorem{restatetheorem}{Theorem}
\title[Hopfield Networks Beyond Synchronous Updates and Forward Euler]{Accelerating Hopfield Network Dynamics:\\Beyond Synchronous Updates and Forward Euler}
\author{\Name{Cédric Goemaere}, \Name{Johannes Deleu}, \Name{Thomas Demeester}
\Email{first.last@ugent.be} \\
\addr IDLab, Department of Information Technology at Ghent University -- imec, Ghent, Belgium\\
}
\begin{document}

\maketitle

\begin{abstract}
The Hopfield network serves as a fundamental energy-based model in machine learning, capturing memory retrieval dynamics through an ordinary differential equation (ODE).
The model's output, the equilibrium point of the ODE, is traditionally computed via synchronous updates using the forward Euler method.
This paper aims to overcome some of the disadvantages of this approach.
We propose a conceptual shift, 
viewing Hopfield networks 
as instances of Deep Equilibrium Models (DEQs).
The DEQ framework not only allows for the use of specialized solvers, but also leads to new insights on an empirical inference technique that we will refer to as `even-odd splitting'.
Our theoretical analysis of the method uncovers a parallelizable asynchronous update scheme, which should converge roughly twice as fast as the conventional synchronous updates.
Empirical evaluations validate these findings, showcasing the advantages of both the DEQ framework and even-odd splitting in digitally simulating energy minimization in Hopfield networks.
The code is available at \url{https://github.com/cgoemaere/hopdeq}.
\end{abstract}
\begin{keywords}
Even-odd splitting, Hopfield network, Deep Equilibrium Model
\end{keywords}

\section{Introduction}
\label{section_introduction}
In 1982, the Hopfield network was suggested as a model for associative memory retrieval \parencite{hopfield1982}. It restores corrupted memories by solving an ordinary differential equation (ODE) representing the gradient field of a learnable energy function, which holds the true memories at its local minima. In recent years, there has been a renewed interest in Hopfield networks, which has lead to a series of architectural improvements over the original formulation \parencite{krotov2016DAM,demircigil2017MHN,krotov_hopfield2021large_associative_memory_problem,ramsauerMHN2021,krotovHAM2021}. In this paper, we consider two formulations of the Hopfield network: the continuous Hopfield network (CHN) of \textcite{bengio_fischer_CHN_original}, and Hierarchical Associative Memory \parencite[HAM;][]{krotovHAM2021}, which extends the framework of classical continuous Hopfield networks \parencite{hopfield1984CHN} to arbitrary network architectures.

Both during training and inference, Hopfield networks require an internal energy minimization.
Physical compute platforms (i.e., neuromorphic hardware) could solve this optimization problem near-instantaneously and at an extremely low energetic cost \parencite{AnalogCHN}, but unfortunately, such technology is not yet commercially available.
In anticipation of these devices, research on Hopfield networks has turned to the use of traditional digital accelerators, such as GPUs, where solving the ODE is computationally intensive, thereby hindering progress in the field.
Accelerating the digital energy minimization simulations is currently an underexplored research direction.
However, we consider it an essential step in stimulating future research on Hopfield networks in general, especially at larger scales than currently investigated.

Deviating from convention, we argue that compute-intensive ODE-based dynamics may be unnecessary for modelling the behavior of a Hopfield network.
Instead, we propose a conceptual shift, casting Hopfield networks to the framework of Deep Equilibrium Models \parencite[DEQs;][]{baiDeepEquilibriumModels2019DEQ}, which focuses on state dynamics rather than energy.
Through this lens, 
we are able to
theoretically motivate an intuitively appealing idea from \textcite{bengio_even_odd_splitting_original} to help accelerate Hopfield networks and uncover the conditions required for its successful application in practice.

\paragraph{Our contributions:}
\begin{enumerate}
\itemsep=0em
\item We propose a new perspective on Hopfield networks, treating them as DEQs instead of ODEs. This conceptual shift simplifies theoretical analysis and enables the use of specialized solvers that may accelerate convergence.
\item Our analysis reveals that, under specific conditions, CHNs can be interpreted as HAMs, challenging the traditional categorization based solely on energy functions.
\item Revisiting an idea from \textcite{bengio_even_odd_splitting_original}, we uncover its nature as a parallelizable asynchronous update scheme that converges twice as fast, as empirically validated on the MNIST dataset across Hopfield networks of varying sizes.
\end{enumerate}

\noindent
This work expands the scope of our NeurIPS workshop paper \parencite{goemaere2023accelerating} to include both HAMs and CHNs, and offers a much more substantial theoretical analysis and empirical validation, without assuming prior knowledge on Hopfield networks.

\section{Preliminaries}
\label{section_preliminaries}
This section briefly introduces the key concepts underlying the remainder of the paper.
We describe the two types of Hopfield networks considered (CHN and HAM) from the perspective of the current literature. Furthermore, we provide a brief introduction to the DEQ framework and revisit the original idea of \textcite{bengio_even_odd_splitting_original}.
Please note that we introduce recurring symbols in \cref{table_symbols}.

\begin{figure}[t]
\begin{minipage}[c]{.42\linewidth}
\centering
\rowcolors{2}{gray!25}{white}
\begin{small}
\begin{tabular}{m{0.38\linewidth} m{0.6\linewidth}}
    \textbf{Symbol} & \textbf{Description} \\
    \midrule
    $\s\in\mathbb{R}^{N}$ & State vector \\
    $E:\mathbb{R}^{N}\rightarrow\mathbb{R}$ & Global energy function \\
    $\rho:\mathbb{R}^{N}\rightarrow\mathbb{R}^{N}$ & \mbox{Activation function\textsuperscript{\textdagger}} (typically non-linear) \\
    $\mathcal{L}:\mathbb{R}^{N}\rightarrow\mathbb{R}$ & Lagrangian function such that $\frac{\partial\mathcal{L}}{\partial\s} = \rho$ \\
    $\W\in\mathbb{R}^{N\times N}$ & State weight matrix \\
    $\vec{b}\in\mathbb{R}^{N}$ & State bias vector \\
    $\x\in\mathbb{R}^d$ & Input vector\\
    $\U\in\mathbb{R}^{N\times d}$ & Input weight matrix \\
    $\odot$ & Hadamard product\\
    $\langle\cdot\rangle^*$ & Vector $\langle\cdot\rangle$ at equilibrium\\
    $\langle\cdot\rangle^n$ & Vector $\langle\cdot\rangle$ at the $n$-th \mbox{iteration} of its DEQ\\
    \bottomrule
    \rowcolor{white} %
    \multicolumn{2}{p{\linewidth}}{\footnotesize \textdagger: We use a scalar function $\rho$ applied element-wise to the state vector as $\rho(\s)$, corresponding to an additive Lagrangian $\mathcal{L}$ \parencite{krotovHAM2021}.}
\end{tabular}
\end{small}
\captionof{table}{List of symbols} %
\label{table_symbols}
\end{minipage}
\hfill
\begin{minipage}[c]{.51\linewidth}
\hspace{6pt}
\begin{minipage}[c]{.65\linewidth}
    \includeinkscape[width=\linewidth, pretex=\footnotesize]{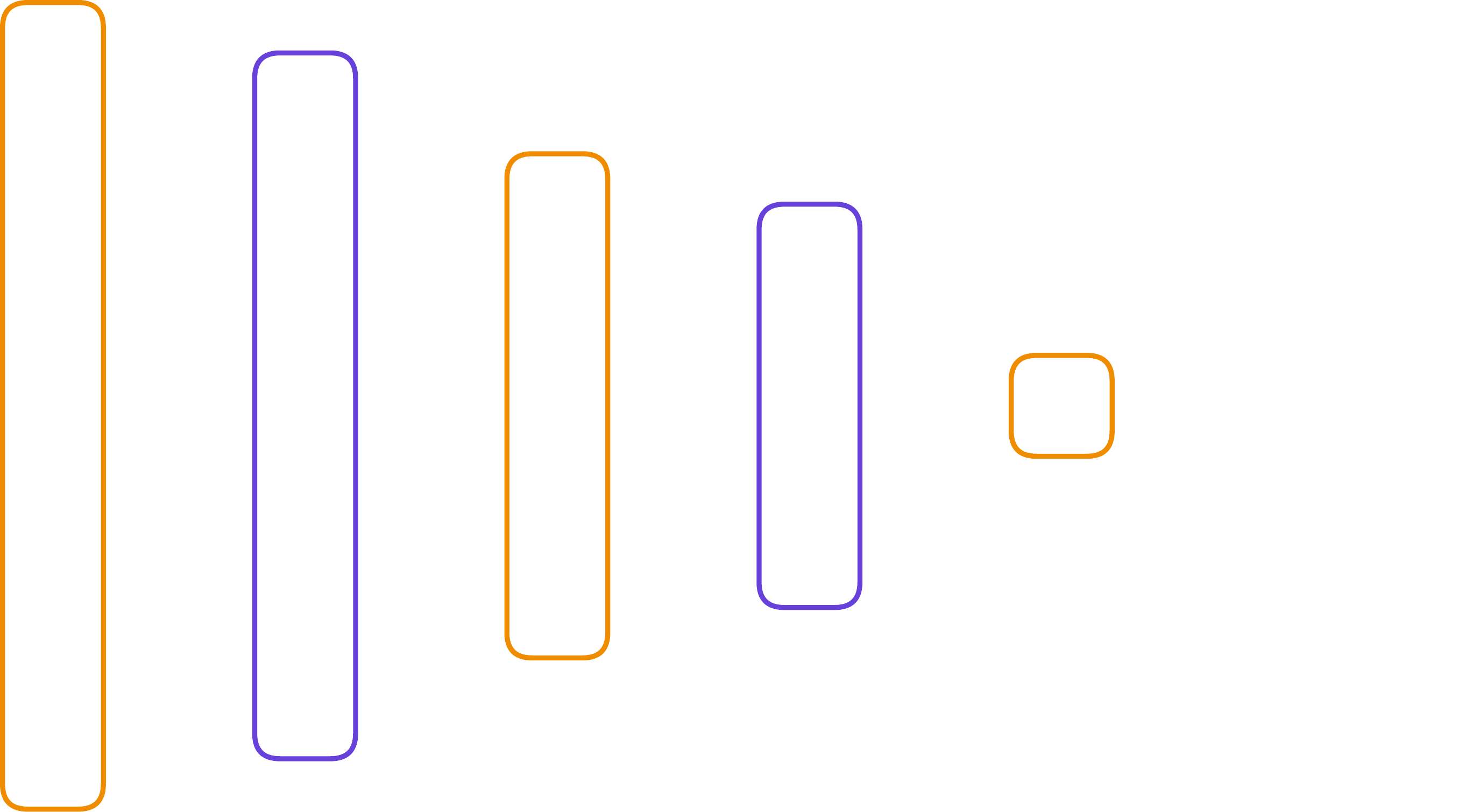_tex}
\end{minipage}
\hspace{-8mm}
\begin{minipage}[c]{.35\linewidth} %
\scalebox{0.9}{%
\(
    \begin{cases}
        \textcolor[HTML]{F08C00}{\s^*_0} &= \x \\
        \textcolor[HTML]{6741D9}{\s^*_1} &= f_{\theta_1}(\textcolor[HTML]{F08C00}{\s^*_0}, \textcolor[HTML]{F08C00}{\s^*_2}) \\
        \textcolor[HTML]{F08C00}{\s^*_2} &= f_{\theta_2}(\textcolor[HTML]{6741D9}{\s^*_1}, \textcolor[HTML]{6741D9}{\s^*_3}) \\
        \textcolor[HTML]{6741D9}{\s^*_3} &= f_{\theta_3}(\textcolor[HTML]{F08C00}{\s^*_2}, \textcolor[HTML]{F08C00}{\s^*_4}) \\
        \textcolor[HTML]{F08C00}{\s^*_4} &= f_{\theta_4}(\textcolor[HTML]{6741D9}{\s^*_3}) = \hat{\y}
    \end{cases}
\)%
}
\end{minipage}\\
\begin{minipage}[c]{.25\linewidth}
\begin{footnotesize}
\begin{equation*}
\s = 
\left\lbrack
\boldmath
\begin{array}{c}
\s_0\\
\s_1\\
\s_2\\
\s_3\\
\s_4
\end{array}
\unboldmath
\right\rbrack
\end{equation*}
\end{footnotesize}
\end{minipage}
\begin{minipage}[c]{.75\linewidth}
\begin{footnotesize}
\begin{equation*}
\W = 
\left\lbrack
\boldmath
\begin{array}{ccccc}
 0   & W_0^T & 0   & 0   & 0   \\
 W_0 & 0   & W_1^T & 0   & 0   \\
 0   & W_1 & 0   & W_2^T & 0   \\
 0   & 0   & W_2 & 0   & W_3^T \\
 0   & 0   & 0   & W_3 & 0  
\end{array}
\unboldmath
\right\rbrack
\end{equation*}
\end{footnotesize}
\end{minipage}
\caption{Diagram of a 5-layer Hopfield network (upper left), with an abstract DEQ formulation of the interlayer dynamics (upper right). State $\s$ and corresponding weight matrix $\W$ (below). Partitioning the layers into \textcolor[HTML]{F08C00}{even} and \textcolor[HTML]{6741D9}{odd} reveals a bipartite structure. Best viewed in color.}
\label{figure_HN_diagram}
\end{minipage}
\end{figure}

\paragraph{Continuous Hopfield network (CHN)}
While originally proposed as a model for associative memory retrieval \parencite{hopfield1984CHN}, \textcite{bengio_fischer_CHN_original} consider the CHN an energy-based model that iteratively updates its hidden neurons to explain signals coming from the sensory neurons, similar to how the brain works.
Based on the Boltzmann machine energy function \parencite{boltzmannmachine1985}, they propose the energy function
\begin{equation}
    E(\s) = \frac{1}{2}||\s||^2  - \frac{1}{2}\rho(\s)^T \W \rho(\s) - \vec{b}^T \rho(\s)
    \label{energy_function_CHN}.
\end{equation}
The zero-diagonal weight matrix $\W$ ($W_{ii}=0$) is often constructed as a symmetric matrix, since any anti-symmetric component is cancelled out in \cref{energy_function_CHN}. By convention, the first $d$ dimensions of the state $\s$ (denoted $\s_0$) constitute the static input $\x$.

In follow-up work, mostly layered instantiations of the CHN have been considered, without intralayer connections \parencite{scellierEquilibriumPropagationBridging2017a,bengio_even_odd_splitting_original,gammellLayerSkippingConnectionsImprove2021,laborieuxImprovingEquilibriumPropagation2023,oconnorINITIALIZEDEQUILIBRIUMPROPAGATION2019,RainPaperNeurIPS},
leading to zero diagonal blocks in $\W$, as illustrated for a 5-layer architecture in \cref{figure_HN_diagram}.

The output of a CHN is $\s^*$, which resides at a minimum of $E$ given $\x$.
The energy $E$ is guaranteed to decrease over time \parencite{scellierEquilibriumPropagationBridging2017a} using the state update rule
\begin{equation}
\frac{d\s}{dt} = -\frac{\partial E}{\partial \s} = -\s + \rho'(\s)\odot\left(\W \rho(\s) + \vec{b}\right)
\label{CHN_state_update_rule}.
\end{equation}
In the literature on Hopfield networks, the equilibrium state $\s^*$ is typically obtained by numerical integration of \cref{CHN_state_update_rule} using the forward Euler method \parencite{bengio_fischer_CHN_original, bengio_even_odd_splitting_original, scellierEquilibriumPropagationBridging2017a, gammellLayerSkippingConnectionsImprove2021}. By contrast, in the field of Neural ODEs \parencite{chen2018neuralODE}, it is customary to use more advanced ODE solvers, and these techniques have been suggested for Hopfield networks as well \parencite{krotovHAM2021}.

\begin{remark}\label{remark_sync_vs_async}
It is common practice to update all states in parallel, known as `synchronous updates'. While fast, this method lacks formal convergence guarantees \parencite{koiran1994dynamics,wang1998_2cycle_convergence}. For guaranteed convergence, one must turn to `asynchronous updates', where the states are sequentially updated, but this can be very slow when applied naively.
\end{remark}

\paragraph{Hierarchical Associative Memory (HAM)}
A HAM \parencite{krotovHAM2021} is the multilayer extension of the classical CHN \parencite{hopfield1984CHN}. The main difference with the CHN from \citeauthor{bengio_fischer_CHN_original} lies in its energy function, which can be defined as
\begin{equation}
    E(\s) = \s^T\rho(\s) - \mathcal{L}(\s)  - \frac{1}{2}\rho(\s)^T \W \rho(\s) -\vec{b}^T\rho(\s)
    \label{energy_function_HAM},
\end{equation}
where the Lagrangian function $\mathcal{L}$ is defined to be the antiderivative of $\rho$ (i.e., $\frac{\partial\mathcal{L}}{\partial\s} = \rho$).

\noindent
By convention, the dynamics of a HAM require the \textit{activations} $\rho(\s)$ to be at equilibrium, rather than the states $\s$. As a result, its state update rule can be simplified to
\begin{equation}
\frac{d\s}{dt} = -\frac{\partial E}{\partial \rho(\s)} = -\s + \W \rho(\s) + \vec{b}
\label{HAM_state_update_rule}.
\end{equation}

\noindent
Notice that the only difference between \cref{CHN_state_update_rule,HAM_state_update_rule} is the missing $\rho'(\s)$-term. All other aspects remain the same, including the implicit input dependence through $\s_0$ and the structure of $\W$ for layered instantiations of the HAM.

\paragraph{Deep Equilibrium Model (DEQ)}
A DEQ \parencite{baiDeepEquilibriumModels2019DEQ} is a recurrent neural network that operates on a static input $\x$.
It returns an equilibrium point $\s^*$, that is defined \emph{implicitly} through the fixed point equation $\s^* = f_\theta(\s^*, \x)$.
The function $f_\theta$ may be any arbitrary computation block, from a simple MLP $\rho(\W\s^* \,{+}\, \vec{b} \,{+}\, \U\x)$ to an elaborate, deep architecture \parencite{baiDeepEquilibriumModels2019DEQ, bai2020mdeq}.

DEQs can be seen as infinite, implicit or adaptive depth models, because the number of iterations -- and hence the depth of the unrolled computational graph -- may be scaled arbitrarily to match the difficulty of the task at hand \parencite{anil2022path_independent_deq}.

Going beyond simple fixed point iteration, DEQs make use of specialized fixed point solvers, such as Anderson acceleration \parencite{anderson1965original_anderson_acceleration, walkerAndersonAccelerationFixedPoint2011} and Broyden's method \parencite{broyden1965method}.
Additionally, their fixed point structure overcomes the need for backpropagation-through-time by offering memory-efficient gradient estimation methods \parencite{baiDeepEquilibriumModels2019DEQ}, based on the implicit function theorem \parencite{krantz2002implicitfunctiontheorem}.

DEQs are part of the broad family of \textit{implicit} models \parencite{implicitmodelswebsite}, that includes Neural Differential Equations \parencite{chen2018neuralODE,kidger2022neuralDE} and differentiable optimization \parencite{amos2019differentiableoptimization}. However, distinguishing between these models can be ambiguous at times, as they often exhibit interchangeable functionality, allowing, for example, Neural ODEs to be represented as DEQs, and vice versa \parencite{kidger2022neuralDE,pal2022ODE_to_infinity_asDEQ}.

\paragraph{Even-odd splitting}
Inspired by block Gibbs sampling in Deep Boltzmann Machines \parencite{salakhutdinov2009deepboltzmannmachines}, \textcite{bengio_even_odd_splitting_original} describe a two-step iterative method to accelerate the energy minimization process of multilayer CHNs. Throughout this paper, we will refer to this technique as `even-odd splitting'.

The intuition is to leverage the problem's bipartite structure (see \cref{figure_HN_diagram}) by splitting it into two smaller, more tractable sub-problems.
In the first step, all odd layers are brought to a local energy minimum, conditioned on the fixed values of the even layers, and in the second step, the roles are reversed.
\citeauthor{bengio_even_odd_splitting_original} argue that iteratively applying these two steps should converge faster than synchronously updating all layers, under the condition that there are no intralayer connections and only connections between successive layers.%
\footnote{Notice how the architecture in \cref{figure_HN_diagram} satisfies both conditions.}

Other than that, the paper does not go into more detail. Even-odd splitting is not its main contribution, and accordingly, no empirical results on the matter are presented.
Despite its intuitive appeal, the method has not gained much traction in the community, possibly because of its ineffectiveness in CHNs in practice (see \cref{section_results}).
Independently, \textcite{RainPaperNeurIPS} introduced a similar technique that shows good performance, but also lacks a theoretical foundation.

\section{Even-odd splitting from a Deep Equilibrium perspective}
\label{section_eo_from_deq_perspective}

Intuitively, even-odd splitting seems like a  promising way to accelerate Hopfield networks, but gaining more theoretical insights is difficult, due to the contemporary energy-based formulation of Hopfield networks. To address this, we propose a conceptual shift towards the DEQ framework, which enables a clearer, step-by-step analysis of the procedure, from the splitting of the states to the local energy optimization.
Our findings reveal that even-odd splitting is not to blame for its ineffectiveness in CHNs. Rather, it is the CHN itself that is inherently inefficient. Contrary to the traditional energy-based distinction, we demonstrate that, under specific conditions, CHNs can be interpreted as HAMs, albeit with a more complicated non-linearity.
Finally, we uncover an elegant mathematical formulation of even-odd splitting in HAMs and discuss its advantages compared to the commonly used synchronous updates.

While the transformation to the DEQ framework primarily serves to streamline our theoretical analysis below, it is worth noting that this reinterpretation also offers practical computational benefits that we exploit in our experiments in \cref{section_results}.
We provide a brief overview of these advantages in \cref{appendix_advantages_deq_framework}.

\subsection{From Hopfield network to DEQ}
\label{section_from_hopfield_to_deq}
Strictly speaking, Hopfield networks are by definition energy-based models, and yet, their dynamics are entirely defined by the ODEs in \cref{CHN_state_update_rule,HAM_state_update_rule}.
In fact, contrary to Neural ODEs \parencite{chen2018neuralODE}, only the equilibrium point matters in a Hopfield network, not the trajectory to get there.
In this particular case, the exact time dynamics are irrelevant, and the problem can be solved much faster by casting the ODE as a DEQ \parencite{pal2022ODE_to_infinity_asDEQ}.
As this has not been previously described for Hopfield networks, we explicitly derive the formulation in \cref{appendix_derivation_deq_HN}, resulting in:
\begin{alignat}{4}
    \textbf{CHN as DEQ:}\qquad
    &\tilde{\s}^* = \rho'(\tilde{\s}^*) \odot(&\tilde{\W}\rho(\tilde{\s}^*) + \tilde{\vec{b}} + \U\rho(\x)&)
    \label{explicit_naive_chn_deq}, \\
    \textbf{HAM as DEQ:}\qquad
    &\tilde{\s}^* = &\tilde{\W}\rho(\tilde{\s}^*) + \tilde{\vec{b}} + \U\rho(\x)&
    \label{explicit_naive_ham_deq}.
\end{alignat}
Here, the tilde on $\tilde{\s}^*$, $\tilde{\W}$, and $\tilde{\vec{b}}$ indicates a change in dimensionality resulting from the separation of the input layer $\s_0 = \x$.
For readability, however, we chose to leave it out in the following sections.

\begin{example}
In a 5-layer Hopfield network like \cref{figure_HN_diagram}, we can easily derive that
\begin{equation*}
\tilde{\W} = 
\left\lbrack
\boldmath
\begin{array}{cccc}
0   & W_1^T & 0   & 0   \\
W_1 & 0   & W_2^T & 0   \\
0   & W_2 & 0   & W_3^T \\
0   & 0   & W_3 & 0  
\end{array}
\unboldmath
\right\rbrack
,\hspace{4pt}
\U = 
\left\lbrack
\boldmath
\begin{array}{ccccc}
W_0  \\
 0   \\
 0   \\
 0   
\end{array}
\unboldmath
\right\rbrack
.
\end{equation*}
\end{example}

\subsection{Splitting the states into even \& odd}
The first step towards understanding even-odd splitting is to introduce a distinction between the even and odd layers, as formalized in \theoremref{theorem_eo_transform} and proven in \cref{appendix_full_derivation_eo}.

\begin{theorem}
\label{theorem_eo_transform}
Even-odd splitting rearranges the layered structure of $\s$ using a permutation matrix $\vec{P}$, such that $\s = [\s_1; \s_2; \s_3; \ldots]$ is converted into $\vec{P}\s = [\s_\textup{\text{even}}; \s_\textup{\text{odd}}]$, with $\s_\textup{\text{even}} = [\s_2,\s_4,\ldots]$ and $\s_\textup{\text{odd}} = [\s_1, \s_3,\ldots]$. Under this change of variables, the fixed point iteration procedures for the CHN and HAM, according to \cref{explicit_naive_chn_deq,explicit_naive_ham_deq}, are transformed into
\begin{align}
\textup{\textbf{CHN iteration:}}\qquad &
\begin{cases}
\s^{n+1}_\textup{\text{even}} &= \rho'(\s^n_\textup{\text{even}}) \odot \vec{W_P^T}\rho(\s^n_\textup{\text{odd}}) + \vec{b}_\textup{\text{even}} \\
\s^{n+1}_\textup{\text{odd}} &= \rho'(\s^n_\textup{\text{odd}}) \odot \vec{W_P} \rho(\s^n_\textup{\text{even}}) + \vec{b}_\textup{\text{odd}} + \U_\textup{\text{odd}}\rho(\x)
\end{cases}
\label{eo_transform_chn_deq_update}%
,\\
\textup{\textbf{HAM iteration:}}\qquad &
\begin{cases}
\s^{n+1}_\textup{\text{even}} &= \vec{W_P^T}\rho(\s^n_\textup{\text{odd}}) + \vec{b}_\textup{\text{even}} \\
\s^{n+1}_\textup{\text{odd}} &= \vec{W_P} \rho(\s^n_\textup{\text{even}}) + \vec{b}_\textup{\text{odd}} + \U_\textup{\text{odd}}\rho(\x)
\end{cases}.
\label{eo_transform_ham_deq_update}
\end{align}
\end{theorem}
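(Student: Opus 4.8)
The plan is to prove \theoremref{theorem_eo_transform} by an explicit change of variables: fix the permutation $\vec{P}$, left-multiply the fixed-point maps of \cref{explicit_naive_chn_deq,explicit_naive_ham_deq} by it, and simply read off the resulting block structure. The only ingredients needed are a handful of structural facts about the (reduced) layered weight matrix and input matrix, together with the observation that a permutation matrix commutes with any element-wise operation.

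First I would fix $\vec{P}$ once and for all as the matrix that reorders the layer-blocks of $\s = [\s_1;\s_2;\s_3;\ldots]$ into $\vec{P}\s = [\s_2;\s_4;\ldots;\s_1;\s_3;\ldots] =: [\s_\text{even};\s_\text{odd}]$, and I would use this same $\vec{P}$ throughout, so that the induced orderings of $\vec{b}_\text{even}$, $\vec{b}_\text{odd}$, $\U_\text{odd}$ and of the blocks of $\vec{W_P}$ are mutually consistent. Then I would record the structural facts visible in \cref{figure_HN_diagram} and the Example: in a layered instantiation the reduced matrix $\tilde{\W}$ is symmetric, block-tridiagonal, and has vanishing diagonal blocks (no intralayer connections, only connections between successive layers), while $\U$ has nonzero rows only in the block corresponding to $\s_1$ (the sole layer adjacent to the separated-out input $\s_0$).

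The core computation is $\vec{P}\tilde{\W}\vec{P}^T$. Because every nonzero block of $\tilde{\W}$ couples a layer $i$ to a layer $i{\pm}1$, and consecutive indices have opposite parity, every nonzero block is an even--odd cross block; hence $\vec{P}\tilde{\W}\vec{P}^T$ is block-antidiagonal with zero diagonal blocks, and symmetry of $\tilde{\W}$ forces its top-right block to be the transpose of its bottom-left block, which I name $\vec{W_P}$. Likewise $\vec{P}\U = [0;\U_\text{odd}]$ since $\s_1$ is an odd layer, and $\vec{P}\tilde{\vec{b}} = [\vec{b}_\text{even};\vec{b}_\text{odd}]$ by the definition of $\vec{P}$. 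With these in hand, I would left-multiply each iteration $\s^{n+1} = f(\s^n)$ of \cref{explicit_naive_chn_deq,explicit_naive_ham_deq} by $\vec{P}$, insert $\vec{P}^T\vec{P}=I$ directly before $\rho(\s^n)$, and use $\vec{P}\rho(\s)=\rho(\vec{P}\s)$, $\vec{P}\rho'(\s)=\rho'(\vec{P}\s)$ (valid since $\rho$ acts element-wise) and $\vec{P}(\vec{u}\odot\vec{v})=(\vec{P}\vec{u})\odot(\vec{P}\vec{v})$; substituting the block forms just derived and splitting into the even and odd row-blocks then yields exactly \cref{eo_transform_chn_deq_update,eo_transform_ham_deq_update}.

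I do not anticipate a real obstacle: the statement is essentially a bookkeeping identity, and most of the conceptual work was already done in passing to the DEQ form (\cref{appendix_derivation_deq_HN}). The step requiring the most care is the block computation of $\vec{P}\tilde{\W}\vec{P}^T$ for arbitrary depth --- making the parity argument uniform in the number of layers, and keeping in mind that the input layer $\s_0$ has already been absorbed into $\U$ by the tilde reduction, so that it plays no role in the even/odd partition. A secondary point is a consistent reading of operator precedence in the CHN update, where, as in \cref{CHN_state_update_rule}, the bias sits inside the Hadamard product with $\rho'$.
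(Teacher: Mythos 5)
Your proposal is correct and follows essentially the same route as the paper's proof in \cref{appendix_full_derivation_eo}: left-multiply the fixed-point map by $\vec{P}$, insert $\vec{P}^T\vec{P}=I$, commute $\vec{P}$ with the element-wise operations (the Hadamard identity you invoke is exactly the paper's Proposition~\ref{prop_permutated_Hadamard}), and read off the block-antidiagonal structure of $\vec{P}\tilde{\W}\vec{P}^T$ together with $\vec{P}\U=[\mathbf{0};\U_\text{odd}]$. Your closing caveat about operator precedence is also well taken: the bias must be read inside the Hadamard product with $\rho'$, consistent with \cref{explicit_naive_chn_deq}, for the CHN case to come out as stated.
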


\begin{example}
In a 5-layer Hopfield network like \cref{figure_HN_diagram}, we have
\begin{equation*}
\vec{W_P} =
\left\lbrack
\boldmath
\begin{array}{cc}
W_1^T & 0   \\
W_2  & W_3^T 
\end{array}
\unboldmath
\right\rbrack
,\hspace{4pt}
\U_\text{odd} = 
\left\lbrack
\begin{array}{c}
 \vec{W_0}\\
\mathbf{0}
\end{array}
\right\rbrack,
\end{equation*}
with the matrix at block position $(i, j)$ in $\vec{W_P}$ representing the influence of $\s^*_{2j+2}$ on $\s^*_{2i+1}$.
\end{example}

\subsection{Finding the local energy optimum}
\label{section_finding_local_energy_minimum}
The next step is to bring each layer to its local energy minimum, given its neighbors.
When viewed in parallel, the task amounts to computing $\s_\text{even}^*$ given $\s_\text{odd}$, and vice versa.
In this regard, HAMs are exceptionally well suited for even-odd splitting. Given a fixed value of $\s_\text{odd}$ in \cref{eo_transform_ham_deq_update}, the equilibrium value $\s^*_\text{even}$ is retrieved after a single iteration, and vice versa.
By contrast, in CHNs (i.e., \cref{eo_transform_chn_deq_update}), finding this local equilibrium point still requires a computationally expensive fixed point iteration, thereby nullifying any practical benefits (see \cref{section_results}).
Here, the DEQ framework proves particularly effective, enabling us to derive an analytic solution, as detailed and proven in \cref{appendix_correspondence_CHN_HAM}, resulting in \theoremref{theorem_equivalence_CHN_HAM}.
\begin{theorem}\label{theorem_equivalence_CHN_HAM}
    Under relatively mild conditions for $\rho$ and up to an input preprocessing step, a well-behaved CHN can be transformed into a functionally equivalent HAM with effective non-linearity $\rho_\varsigma := \rho\circ \varsigma^{-1}$, where $\varsigma(\s) \,{:=}\, \s \,{\oslash}\, \rho'(\s)$, with ${\oslash}$ representing the Hadamard division.
\end{theorem}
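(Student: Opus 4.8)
The plan is to turn the CHN fixed-point equation \cref{explicit_naive_chn_deq} into the HAM fixed-point equation \cref{explicit_naive_ham_deq} by a single element-wise change of variables, leaving the weight matrices untouched. Writing \cref{explicit_naive_chn_deq} component-wise gives $s_i^* = \rho'(s_i^*)\,\bigl(\W\rho(\s^*) + \vec{b} + \U\rho(\x)\bigr)_i$. Assuming $\rho'$ never vanishes, I divide each component through by $\rho'(s_i^*)$, which by the definition $\varsigma(\s) := \s \oslash \rho'(\s)$ yields the vector identity $\varsigma(\s^*) = \W\rho(\s^*) + \vec{b} + \U\rho(\x)$. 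This already has the shape of a HAM equilibrium condition — an affine map of the activations — except that the state on the left sits inside $\varsigma$.

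Next I would introduce the new state $\vec{\sigma}^* := \varsigma(\s^*)$. Assuming $\varsigma$ is invertible (the delicate point, addressed below), we have $\s^* = \varsigma^{-1}(\vec{\sigma}^*)$ and hence $\rho(\s^*) = (\rho\circ\varsigma^{-1})(\vec{\sigma}^*) = \rho_\varsigma(\vec{\sigma}^*)$, so the equilibrium condition becomes $\vec{\sigma}^* = \W\rho_\varsigma(\vec{\sigma}^*) + \vec{b} + \U\rho(\x)$. This is precisely \cref{explicit_naive_ham_deq} with $\rho$ replaced by $\rho_\varsigma$, i.e.\ a HAM. The only mismatch with a \emph{native} HAM carrying non-linearity $\rho_\varsigma$ is the input term, which in such a HAM would read $\U\rho_\varsigma(\cdot)$; I remove the mismatch by preprocessing the input once as $\x \mapsto \varsigma(\x)$, since then $\rho_\varsigma(\varsigma(\x)) = \rho(\x)$ — this is the ``input preprocessing step'' in the statement. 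Functional equivalence is then immediate: by construction $\rho_\varsigma(\vec{\sigma}^*) = \rho(\s^*)$, so the transformed HAM has the same equilibrium activations as the original CHN, and since a HAM's dynamics and readout are expressed through its activations, the two networks compute the same function (on the output layer the correspondence is the identity whenever that layer is linear). Finally I would note that $\rho_\varsigma$ is a continuous scalar function applied element-wise, so it admits an antiderivative and the transformed model has a bona fide Lagrangian and energy of the form \cref{energy_function_HAM}, confirming it is genuinely a HAM.

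The crux — and the origin of the hypotheses ``relatively mild conditions on $\rho$'' and ``well-behaved CHN'' — is establishing that $\varsigma$ is a genuine bijection on the relevant domain. Since $\varsigma$ acts element-wise, it suffices that the scalar map $s \mapsto s/\rho'(s)$ is (i) well-defined, i.e.\ $\rho'(s) \neq 0$ for all $s$, and (ii) strictly monotone and onto, so that $\varsigma^{-1}$ exists globally. Differentiating gives $\varsigma'(s) = \bigl(\rho'(s) - s\,\rho''(s)\bigr)/\rho'(s)^2$, so for increasing $\rho$ condition (ii) reduces to the pointwise inequality $\rho'(s) > s\,\rho''(s)$, which I would verify for the standard choices, checking $\tanh$ and the logistic sigmoid explicitly as running examples. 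The ``well-behaved'' qualifier on the CHN then packages the remaining bookkeeping: that a CHN equilibrium exists in the first place, and that $\varsigma^{-1}$ is defined on the whole range of the affine map $\W\rho(\cdot) + \vec{b} + \U\rho(\x)$ so that the substitution above is legitimate for the equilibrium that actually occurs. I expect the genuine difficulty to lie in this domain-matching and in stating the conditions on $\rho$ tightly, rather than in the change of variables itself, which is essentially a one-line computation.
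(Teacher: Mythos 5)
Your proposal is correct and follows essentially the same route as the paper: divide the CHN fixed-point equation by $\rho'(\s^*)$, introduce $\varsigma$ and the change of variables $\s_\varsigma^* := \varsigma(\s^*)$, identify $\rho_\varsigma = \rho\circ\varsigma^{-1}$, and absorb the input mismatch by preprocessing with $\varsigma$. The only cosmetic difference is that the paper derives the non-vanishing of $\rho'(\s^*)$ at equilibrium from its definition of ``well-behaved'' (uniqueness and non-triviality of $\s^*$) via two short lemmas, whereas you assume $\rho'$ never vanishes globally and fold the rest into the hypotheses, while adding a slightly more explicit discussion of when $\varsigma$ is bijective.
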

\vspace{-8pt}
\begin{remark}\label{remark_CHN_is_HAM_with_difficult_rho}
In practice, one typically picks a non-linearity $\rho$ that can be quickly computed, such as a sigmoid.
Nevertheless, \theoremref{theorem_equivalence_CHN_HAM} states that a well-behaved CHN internally models a function inversion $\varsigma^{-1}$, which it performs iteratively at inference.
Hence, a CHN can be thought of as a HAM with a non-linearity that is more involved to compute.
\end{remark}

\subsection{DEQ formulation of even-odd splitting in HAMs}
Consider a Hopfield network with a layered architecture, such as the one in \cref{figure_HN_diagram}.
For an odd number $2k{+}1$ of layers, the output layer $\s^*_{2k}$ belongs to $\s^*_\text{even}$, such that $\s_\text{odd}$ serves only as an auxiliary variable, which need not be explicitly modelled, and vice versa for an even number of layers.
In HAMs, this entails a substitution of the expression for $\s^{n+1}_\text{odd}$ in \cref{eo_transform_ham_deq_update} into the one for $\s^{n+2}_\text{even}$, resulting in
\begin{equation}
\s^{n+2}_\text{even} = \vec{W_P^T}\rho\big(\vec{W_P} \rho(\s^n_\text{even}) + \vec{b}_\text{odd} + \U_\text{odd}\rho(\x)\big) + \vec{b}_\text{even}.
\label{even_odd_deq_update_rule}
\end{equation}
From a different perspective, \cref{even_odd_deq_update_rule} may be interpreted as a \textit{single} iteration of the DEQ
\begin{equation}
\s^*_\text{even} = \vec{W_P^T}\rho\big(\vec{W_P} \rho(\s^*_\text{even}) + \vec{b}_\text{odd} + \U_\text{odd}\rho(\x)\big) + \vec{b}_\text{even},
\label{even_odd_deq}
\end{equation}
corresponding to the DEQ formulation of even-odd splitting in HAMs and certain CHNs (by \theoremref{theorem_equivalence_CHN_HAM}).

\subsection{Advantages of even-odd splitting in Hopfield networks}
\label{section_advantages_eo}
In Hopfield networks, even-odd splitting comes with two notable advantages over the traditional synchronous updates: convergence is reached faster and is always guaranteed.

\paragraph{Faster convergence}
By advancing two time steps (i.e., $\s_\text{even}^n \!\rightarrow \s_\text{even}^{n+2}$ in \cref{even_odd_deq_update_rule}) in a single iteration, even-odd splitting should converge twice as fast as fully synchronous updates of \cref{explicit_naive_ham_deq}, as given by \cref{eo_transform_ham_deq_update}.
Crucially, one fixed point iteration of \cref{even_odd_deq} still requires \emph{exactly} as many computations as an iteration of \cref{explicit_naive_ham_deq}
when not accounting for multiplications by zero (i.e., assuming an optimal block-sparse matrix multiplication).
To verify this, observe that a single iteration always corresponds to all states being updated exactly once (in \cref{even_odd_deq_update_rule}: first, $\s_\text{odd}$ at step $n\!+\!1$ and then $\s_\text{even}$ at step $n\!+\!2$).

\paragraph{Guaranteed convergence}
As mentioned in \remarkref{remark_sync_vs_async}, asynchronous update schemes have convergence guarantees for Hopfield networks, unlike synchronous updates. Interestingly, the order in which the states are asynchronously updated, traditionally chosen to be random, does not influence this property \parencite{koiran1994dynamics}.%
\footnote{The order does, however, influence the convergence \textit{speed}.}
Through a clever grouping of the states into even and odd layers, even-odd splitting essentially establishes an ordering of asynchronous state updates that achieves maximal parallelism in layered architectures.

In \cref{appendix_redundancy_of_synchronous_updates}, we provide some intuition into the problems that may occur when using synchronous updates and highlight how even-odd splitting naturally overcomes them.

\section{Experimental results}
\label{section_results}

\setlength{\columnsep}{17pt}
\begin{table}[!htb]
\centering
\resizebox{0.85\width}{!}{%
\begin{tabular}{ccccc}
    & Model & \#iters to conv. & Speedup & Test acc.~(\%) \\
    \midrule
    \multirowvtext{5}{3 layers}
    & CHN (10 epochs) & 75.5 ($\pm$2.1) & 0.5x & 97.9 ($\pm$0.1)\\
    & CHN (3 epochs) & 39.1 ($\pm$3.8) & 1x & 97.0 ($\pm$0.2)\\
    & CHN-DEQ\textsuperscript{$\alpha$} & 20.6 ($\pm$0.2) & \textbf{1.9x} & 97.2 ($\pm$0.3)\\
    & CHN-EO\textsuperscript{$\beta$} & 16.8 ($\pm$0.5) & 0.1x & 97.1 ($\pm$0.1)\\
    & CHN-EO-DEQ\textsuperscript{$\beta$} & \textbf{16.2} ($\pm$1.0) & 0.1x & 97.1 ($\pm$0.2)\\
    \midrule
    \multirowvtext{4}{3 layers}
    & HAM & 11.9 ($\pm$0.4) & 1x & 97.9 ($\pm$0.0)\\
    & HAM-DEQ & 9.9 ($\pm$0.3) & 1.2x & 97.9 ($\pm$0.1)\\
    & HAM-EO & 8.0 ($\pm$0.2) & 1.5x & 97.9 ($\pm$0.1)\\
    & HAM-EO-DEQ & \textbf{6.6} ($\pm$0.2) & \textbf{1.8x} & 97.9 ($\pm$0.1)\\
    \midrule
    \multirowvtext{4}{5 layers}
    & HAM & 36.0 ($\pm$1.8) & 1x & 97.1 ($\pm$0.1)\\
    & HAM-DEQ & 33.0 ($\pm$0.6) & 1.1x & 97.1 ($\pm$0.2)\\
    & HAM-EO & 18.3 ($\pm$0.5) & 2.0x & 97.1 ($\pm$0.1)\\
    & HAM-EO-DEQ & \textbf{17.7} ($\pm$0.3) & \textbf{2.0x} & 97.1 ($\pm$0.1)\\
    \midrule
    \multirowvtext{4}{7 layers}
    & HAM & 67.1 ($\pm$2.9) & 1x & 95.6 ($\pm$0.2)\\
    & HAM-DEQ & 56.0 ($\pm$1.4) & 1.2x & 95.6 ($\pm$0.1)\\
    & HAM-EO & 32.2 ($\pm$0.8) & 2.1x & 95.5 ($\pm$0.2)\\
    & HAM-EO-DEQ & \textbf{31.0} ($\pm$1.0) & \textbf{2.2x} & 95.5 ($\pm$0.2)\\
    \bottomrule
    \multicolumn{5}{l}{\footnotesize $\alpha$: trained for only 4 epochs; consistently became unstable during 5\textsuperscript{th} epoch}\\
    \multicolumn{5}{l}{\footnotesize $\beta$: 1 iteration of even-odd splitting in CHNs comprises 2x10 local iterations}
\end{tabular}%
}
\caption{Impact of DEQ solver (`DEQ') and even-odd splitting (`EO') on the mean number of iterations until convergence and MNIST test accuracy. Speedup refers to the reduction in state updates needed to reach convergence w.r.t.\ the base model (including within-iteration updates). Results aggregated across 5 runs.}
\label{table_results}
\vspace{-13pt}
\end{table}

\subsection{Setup}

\noindent
To assess the impact of even-odd splitting and DEQ solvers in Hopfield networks,
we performed an ablation study on several CHNs and HAMs of different depths. The models were trained on the MNIST dataset \parencite{lecun1998mnist,cohen2017emnist} for 10 epochs, with the relevant techniques active both \mbox{during} training and testing.

We evaluated convergence speed and performance on the test set and list the results in \cref{table_results}.
Rather than relying on wall time or FLOPS, which depend on the exact implementation and hardware, we quantify convergence speed by the number of iterations required to reach convergence\footnote{In our experiments, `convergence' denotes a relative residual $\frac{||\s^{n+1} - \s^n||_2}{||\s^{n+1}||_2} < 10^{-4}$.}.
Recall, however, that one iter\-ation of even-odd splitting invol\-ves finding two local equilibrium points ($\s^*_\text{odd}$ \& $\s^*_\text{even}$).
In HAMs, each point is retrieved in a single step, but in CHNs, it may require several iterations (here: 10).\\

\noindent
Further details are provided in \cref{appendix_experimental_setup}.

\subsection{Discussion}

Our experiments reveal that \textbf{combining the DEQ solver with even-odd splitting significantly accelerates convergence}, with both methods effective individually and most impactful when used together (see \cref{table_results}).
This aligns with our expectations from \cref{section_advantages_eo}, where we predicted that even-odd splitting would halve the iteration count.

\textbf{In CHNs}, we observed a trade-off between speed and test accuracy.
The vanilla CHN gradually demands more iterations as training progresses, a common phenomenon in DEQs \parencite{baiStabilizingEquilibriumModels2021}. In the other configurations, though, it was far less prominent.

To ensure a fair comparison, we limited the vanilla CHN’s training to 3 epochs.
Even then, the DEQ solver still shows a substantial speedup, as it constitutes a more powerful method to solve the CHN's function inversion at inference (see \remarkref{remark_CHN_is_HAM_with_difficult_rho}).
Conversely, even-odd splitting does not offer a practical speedup. The added cost of resolving local equilibria in each iteration significantly outweighs the reduction in total iterations.

\textbf{In HAMs}, both the DEQ solver and even-odd splitting enhance convergence without trade-offs in test accuracy.
The simpler model dynamics make these methods particularly effective, delivering optimal performance when combined.
In contrast to its substantial role in CHNs, the DEQ solver’s more limited advantage in HAMs indicates that, here, its value primarily lies in stabilizing initial conditions, where early dynamics differ from those in the stable regime.
For a visual comparison of the state dynamics in the different models, we refer the reader to \cref{figure_state_dynamics_3layers_CHN_HAM,figure_state_dynamics_5-7layers} in \cref{appendix_visual_state_dynamics}.

\section{Limitations}
\label{section_limitations}
Our experiments focused on the intrinsic effects of DEQ solvers and even-odd splitting on the \textit{convergence speed} of Hopfield networks, which may not directly translate to reduced \textit{compute times}. For small architectures simulated on modern processors, the added solver complexity or the sequential nature of even-odd splitting could offset the convergence gains. In our tests, even-odd splitting showed notable computational benefits, whereas DEQ solvers were slightly less efficient. Although these methods may offer greater advantages with larger models, our conclusions on scalability are limited given this study's proof-of-concept scope.

State-of-the-art performance was not our objective; instead, we opted for minimal hyperparameter tuning, only enough to ensure model stability. The impact of design choices (e.g., initialization, non-linearity, optimizer, number of layers) in Hopfield networks remains underexplored, leaving room for further improvement.

One key aspect is the choice of Lagrangian $\mathcal{L}$, which defines the family of HAM models.
In particular, we did not investigate the HAM extension of the Modern Hopfield Network \parencite{ramsauerMHN2021,krotov_hopfield2021large_associative_memory_problem}, which could be an interesting direction for future research.

Lastly, while we focused on Hopfield networks, even-odd splitting may also benefit other architectures with bipartite structures.
Similarly, DEQ solvers -- often designed as root finders -- could be useful in accelerating other energy-based models, provided that the gradient can be formulated easily.

\section{Conclusion}

The goal of this paper was to accelerate the digital simulation of energy minimization in Hopfield networks.
To that end, we proposed a conceptual shift, away from the traditional energy-based view, toward the DEQ framework, which allows for simpler analysis and offers several computational advantages, such as specialized solvers and lower memory complexity.
This perspective enabled us to derive theoretical underpinnings for even-odd splitting, an intuitive idea from \textcite{bengio_even_odd_splitting_original}, and uncovered a correspondence between two commonly used types of Hopfield networks, the CHN and HAM. Our analysis revealed that the CHN essentially performs a function inversion at inference, which may not be computationally optimal.
The experimental results demonstrate the effectiveness of both the DEQ framework (with its specialized solvers) and even-odd splitting, especially when combined. Specifically, we observed that these techniques reduced the required amount of iterations to reach convergence, without compromising on test accuracy.

In light of these findings, we advocate for the use of the DEQ framework as a basis for both theoretical analysis and practical implementation of Hopfield networks, given its concise, equilibrium-focused notation and its computational efficiency.
For researchers looking to speed up their Hopfield networks without sacrificing performance, we recommend the use of DEQ solvers and even-odd splitting, alongside traditional solutions like optimized code and high-performance hardware.
Additionally, we encourage researchers to focus on HAMs instead of CHNs, as the primary difference lies in the choice of non-linearity, with HAMs being more computationally efficient.
The tools and methodologies presented in this work aim to facilitate the practical scaling-up of Hopfield networks, which we hope will stimulate further research into this growing field.

\acks{We are grateful to the anonymous reviewers for their valuable feedback and suggestions, which helped improve the presentation and clarity of this paper.\\
This research was funded by the Research Foundation -- Flanders (FWO-Vlaanderen) through CG's PhD Fellowship fundamental research with grant number 11PR824N, and the research project fundamental research G0C2723N.}

\newrefcontext[sorting=nyt] %
\printbibliography

\newpage

\appendix

\newrefcontext[sorting=ynt] %

\section{Advantages of viewing Hopfield networks as DEQs}
\label{appendix_advantages_deq_framework}
Below, we describe the advantages that come with our transition from the energy-based view of Hopfield networks to the DEQ framework of \cref{section_from_hopfield_to_deq}.

First, we would like to highlight that the DEQ formulation fully encompasses the Hopfield network's original ODE formulation.
In fact, solving the DEQs in \cref{explicit_naive_chn_deq,explicit_naive_ham_deq} with a damped Picard iteration is mathematically equivalent to solving the model's ODE with the forward Euler method, where the damping factor corresponds to the time step size.
Using specialized DEQ solvers allows for even faster convergence\footnote{The DEQ solver cannot guarantee energy minimization and may move towards spurious extrema. In this work, however, we will assume that it always finds the true energy minimum.}, as shown in \cref{section_results}.

Additionally, while Hopfield networks are typically trained using backpropagation-through-time, DEQs offer more memory-efficient methods, such as recurrent backpropagation \parencite{almeida1987recurrentbackprop,pineda1987recurrentbackprop}.
While similar algorithms have been suggested for the energy-based setting \parencite{scellierEquilibriumPropagationBridging2017a}, they aim to approximate these exact methods and are often sensitive to the exact choice of hyperparameters \parencite{RainPaperNeurIPS}.

Furthermore, unlike for Hopfield networks, the stability of DEQs is a widely studied area, that includes regularization terms and even parametrizations that are provably stable \parencite{ghaouiImplicitDeepLearning2020, baiDeepEquilibriumModels2019DEQ, baiStabilizingEquilibriumModels2021, revayLipschitzBoundedEquilibrium2020, winstonMonotoneOperatorEquilibrium2021}.

Moving past computational advantages, we argue that the DEQ framework is a more natural way of reasoning about the dynamics of Hopfield networks, as it offers a comprehensible, concise formulation that operates directly at the equilibrium level. Compared to the energy-based setting, the DEQ framework makes it significantly easier to study the characteristics of techniques like even-odd splitting, that act on the states at equilibrium.

While the close relationship between DEQs and Hopfield networks has been noticed before \parencite{krotovHAM2021, otaIMixerHierarchicalHopfield2023, laborieuxImprovingEquilibriumPropagation2023}, remarkably, none of the many advantages described here are exploited in these works.

\section{Derivation of DEQ formulation of CHN \& HAM}
\label{appendix_derivation_deq_HN}
Given the expression for the state update rule $\frac{d\s}{dt}$ in a Hopfield network, we may implicitly define $\s^*$ as the equilibrium state for which $\frac{d\s}{dt}(\s^*) = \mathbf{0}$, thereby yielding the desired DEQ formulation.
We work out the details below for the CHN and HAM.

\paragraph{DEQ formulation of CHN}\newline
For CHNs, $\frac{d\s}{dt}$ is given by \cref{CHN_state_update_rule}. Setting this to zero, we find the following DEQ:
\begin{equation}
    \s^* = \rho'(\s^*) \odot (\W \rho(\s^*) + \vec{b})
    \label{implicit_naive_chn_deq}.
\end{equation}
At first glance, this DEQ seems to be independent of the input and therefore always converge to the same $\s^*$. However, recall that the input $\x$ is \emph{implicitly} applied through the first $d$ states.

When the equilibrium state $\s^*$ is split up into the input $\x$ and hidden state $\tilde{\s}^*$, i.e., $\s^* = [\x;\tilde{\s}^*]$, we can reformulate \cref{implicit_naive_chn_deq} with an explicit input dependence as
\begin{equation}
    \tilde{\s}^* = \rho'(\tilde{\s}^*) \odot(\tilde{\W}\rho(\tilde{\s}^*) + \tilde{\vec{b}} + \U\rho(\x)),
\end{equation}
where the tilde on $\tilde{\s}^*$, $\tilde{\W}$ and $\tilde{\vec{b}}$ indicates a change in dimensions due to the slicing operation.

\begin{example}
In a 5-layer CHN like \cref{figure_HN_diagram}, we can easily derive that
\begin{equation*}
\tilde{\W} = 
\left\lbrack
\boldmath
\begin{array}{cccc}
0   & W_1^T & 0   & 0   \\
W_1 & 0   & W_2^T & 0   \\
0   & W_2 & 0   & W_3^T \\
0   & 0   & W_3 & 0  
\end{array}
\unboldmath
\right\rbrack
,\hspace{4pt}
\U = 
\left\lbrack
\boldmath
\begin{array}{ccccc}
W_0  \\
 0   \\
 0   \\
 0   
\end{array}
\unboldmath
\right\rbrack
.
\end{equation*}
\end{example}

\paragraph{DEQ formulation of HAM}\newline
For HAMs, the derivation is entirely identical as for CHNs. Starting from \cref{HAM_state_update_rule}, we find
\begin{equation}
    \tilde{\s}^* = \tilde{\W}\rho(\tilde{\s}^*) + \tilde{\vec{b}} + \U\rho(\x).
\end{equation}
For readability, we leave out the tilde in the rest of the paper.

\section{Splitting the states into even \& odd: full derivation}
\label{appendix_full_derivation_eo}
Below, we provide a detailed derivation of \cref{eo_transform_chn_deq_update,eo_transform_ham_deq_update}, serving as a proof for \theoremref{theorem_eo_transform}.
We begin with the more concise derivation of even-odd splitting in HAMs. For the CHN, the process is entirely analogous, therefore, we provide only a rough sketch of the derivation.

\subsection{Derivation of even-odd splitting in HAMs}
First, we multiply both sides of \cref{explicit_naive_ham_deq} with a general permutation matrix $\vec{P}$ and find:
\begin{align}
    \vec{P}\s^* &= \vec{P}\W \rho(\s^*) + \vec{P}\vec{b}+\vec{P}\U\rho(\x) \nonumber\\
                &= \vec{P}\W\vec{P}^T \vec{P}\rho(\s^*) + \vec{P}\vec{b}+\vec{P}\U\rho(\x) \nonumber\\
                &= \vec{P}\W\vec{P}^T \rho(\vec{P}\s^*) + \vec{P}\vec{b}+\vec{P}\U\rho(\x)
    \label{P_transformed_explicit_naive_ham_deq},
\end{align}
where $\vec{P}$ can be brought inside $\rho$, as it applies the same element-wise non-linearity over all states.%
\footnote{For a more general $\rho$, one should use a permuted version $\rho_{_{\vec{P}}}$.}
In even-odd splitting, $\vec{P}$ transforms $\s^* = [\s_1^*; \s_2^*; \s_3^*; \ldots]$ into $\vec{P}\s^* = [\s^*_\text{even}; \s^*_\text{odd}]$, with $\s^*_\text{even} = [\s^*_2,\s^*_4,\ldots]$ and $\s^*_\text{odd} = [\s^*_1, \s^*_3,\ldots]$. Using this specific $\vec{P}$, we find
\begin{align*}
\vec{P} \W \vec{P^T} &= 
\left\lbrack
\begin{array}{cc}
 \mathbf{0}   & \vec{W_P^T} \\
\vec{W_P} & \mathbf{0}
\end{array}
\right\rbrack,
&
\vec{P} \s^* &= 
\left\lbrack
\begin{array}{c}
\s^*_\text{even}\\
\s^*_\text{odd}
\end{array}
\right\rbrack,
\\ %
\vec{P} \vec{b} &= 
\left\lbrack
\begin{array}{c}
\vec{b}_\text{even}\\
\vec{b}_\text{odd}
\end{array}
\right\rbrack,
&
\vec{P} \U &= 
\left\lbrack
\begin{array}{c}
\mathbf{0} \\
\U_\text{odd}
\end{array}
\right\rbrack.
\end{align*}
\begin{example}
In a 5-layer HAM like \cref{figure_HN_diagram}, we have
\begin{equation*}
\vec{W_P} =
\left\lbrack
\boldmath
\begin{array}{cc}
W_1^T & 0   \\
W_2  & W_3^T 
\end{array}
\unboldmath
\right\rbrack
,\hspace{4pt}
\U_\text{odd} = 
\left\lbrack
\begin{array}{c}
 \vec{W_0}\\
\mathbf{0}
\end{array}
\right\rbrack,
\end{equation*}
with the matrix at block position $(i, j)$ in $\vec{W_P}$ representing the influence of $\s^*_{2j+2}$ on $\s^*_{2i+1}$. The locations of the zero matrices in $\vec{W_P}$ and $\U_\textup{\text{odd}}$ correspond to skip connections, which are technically allowed, as long as they are between even and odd layers.\footnote{The second condition of \citeauthor{bengio_even_odd_splitting_original} was phrased too restrictively.}
\end{example}
With the proper substitutions in \cref{P_transformed_explicit_naive_ham_deq}, we find
\begin{equation}
\begin{cases}
\s^*_\text{even} &= \vec{W_P^T}\rho(\s^*_\text{odd}) + \vec{b}_\text{even} \\
\s^*_\text{odd} &= \vec{W_P} \rho(\s^*_\text{even}) + \vec{b}_\text{odd} + \U_\text{odd}\rho(\x)
\end{cases}
\label{explicit_ham_eo_deq}
\end{equation}
Therefore, the synchronous state update rule corresponding to the fixed point iteration procedure for the HAM, according to \cref{explicit_naive_ham_deq},
\begin{equation*}
\s^{n+1} = \vec{W}\rho(\s^n) + \vec{b} + \U\rho(\x),
\end{equation*}
in which the state superscript $n$ denotes the iteration index,
can be written as

\begin{equation*}
\left\lbrack
\begin{array}{c}
\s^{n+1}_\text{even}\\
\s^{n+1}_\text{odd}
\end{array}
\right\rbrack
=
\left\lbrack
\begin{array}{cc}
 \mathbf{0}   & \vec{W_P^T} \\
\vec{W_P} & \mathbf{0}
\end{array}
\right\rbrack
\rho
\left(
\left\lbrack
\begin{array}{c}
\s^{n}_\text{even}\\
\s^{n}_\text{odd}
\end{array}
\right\rbrack
\right)
+
\left\lbrack
\begin{array}{c}
\vec{b}_\text{even}\\
\vec{b}_\text{odd}
\end{array}
\right\rbrack
+
\left\lbrack
\begin{array}{c}
\mathbf{0} \\
\U_\text{odd}
\end{array}
\right\rbrack
\rho(\x),
\end{equation*}
or simplified,
\begin{equation*}
\begin{cases}
\s^{n+1}_\text{even} &= \vec{W_P^T}\rho(\s^n_\text{odd}) + \vec{b}_\text{even} \\
\s^{n+1}_\text{odd} &= \vec{W_P} \rho(\s^n_\text{even}) + \vec{b}_\text{odd} + \U_\text{odd}\rho(\x)
\end{cases}.
\end{equation*}

\subsection{Derivation of even-odd splitting in CHNs}
The derivation of even-odd splitting in CHNs is entirely analogous to the one for HAMs, with the subtle difference of the added $\rho'$-term. This mainly poses a challenge in finding the equivalent for \cref{P_transformed_explicit_naive_ham_deq}, as it requires Proposition \ref{prop_permutated_Hadamard}.

\begin{proposition}\label{prop_permutated_Hadamard}
    For a permutation matrix $\vec{P}$ and vectors $\vec{a}, \vec{b} \in \mathbb{R}^{N}$: $\vec{P} (\vec{a} \odot  \vec{b}) =  \vec{Pa} \odot  \vec{Pb}$
\end{proposition}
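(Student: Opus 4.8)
The plan is to prove the identity coordinate-wise, exploiting the single fact that a permutation matrix acts on a vector merely by reordering its entries, while the Hadamard product is defined entrywise. First I would fix notation: let $\pi$ be the permutation of $\{1,\dots,N\}$ associated with $\vec{P}$, so that $(\vec{P}\vec{v})_i = v_{\pi(i)}$ for every $\vec{v}\in\mathbb{R}^N$ and every index $i$. Equivalently, $P_{ij} = 1$ iff $j=\pi(i)$ and $P_{ij}=0$ otherwise; the only structural property used is that each row of $\vec{P}$ has exactly one nonzero entry, equal to $1$.

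Next I would evaluate the $i$-th coordinate of the left-hand side, applying the definition of $\vec{P}$ and then the definition of the Hadamard product:
\[
\bigl(\vec{P}(\vec{a}\odot\vec{b})\bigr)_i = (\vec{a}\odot\vec{b})_{\pi(i)} = a_{\pi(i)}\,b_{\pi(i)}.
\]
Then I would evaluate the $i$-th coordinate of the right-hand side, applying the definition of the Hadamard product first and then the definition of $\vec{P}$ to each factor:
\[
(\vec{Pa}\odot\vec{Pb})_i = (\vec{Pa})_i\,(\vec{Pb})_i = a_{\pi(i)}\,b_{\pi(i)}.
\]
Since the two coordinates coincide for every $i\in\{1,\dots,N\}$, the vectors are equal, which completes the proof.

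There is no real obstacle here; the only point requiring care is bookkeeping with the index convention for how $\vec{P}$ acts on a vector. It is worth noting that the argument goes through verbatim for any matrix each of whose rows has at most one nonzero entry — permutation matrices being the special case in which that entry is $1$ and every coordinate is hit exactly once, which is precisely what makes $\vec{P}$ invertible and legitimizes the manipulation $\vec{P}(\vec{a}\odot\vec{b}) \to \vec{Pa}\odot\vec{Pb}$ used to pass a permutation through the element-wise non-linearity in the CHN derivation.
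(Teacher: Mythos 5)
Your coordinate-wise argument is correct and is essentially the canonical proof of this fact; the paper itself states Proposition~\ref{prop_permutated_Hadamard} without proof, so there is nothing to diverge from. One caveat on your closing remark: the claim that the argument goes through \emph{verbatim} for any matrix with at most one nonzero entry per row is false. If row $i$ has its single nonzero entry $c_i$ in column $\pi(i)$, the left-hand side's $i$-th coordinate is $c_i\,a_{\pi(i)}b_{\pi(i)}$ while the right-hand side's is $c_i^2\,a_{\pi(i)}b_{\pi(i)}$, so the identity requires $c_i\in\{0,1\}$; the nonzero entries being exactly $1$ is what makes the identity hold, not merely what makes $\vec{P}$ invertible. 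This does not affect the validity of your proof of the proposition as stated.
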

Using Proposition \ref{prop_permutated_Hadamard}, we may start from
\begin{align*}
    \vec{P}\s^* &= \vec{P}\rho'(\s^*) \odot (\vec{P}\W \rho(\s^*) + \vec{P}\vec{b}+\vec{P}\U\rho(\x)),
\end{align*}
from which we can proceed, entirely analogously to the case of the HAM, to eventually find
\begin{equation*}
\begin{cases}
\s^{n+1}_\textup{\text{even}} &= \rho'(\s^n_\textup{\text{even}}) \odot \vec{W_P^T}\rho(\s^n_\textup{\text{odd}}) + \vec{b}_\textup{\text{even}} \\
\s^{n+1}_\textup{\text{odd}} &= \rho'(\s^n_\textup{\text{odd}}) \odot \vec{W_P} \rho(\s^n_\textup{\text{even}}) + \vec{b}_\textup{\text{odd}} + \U_\textup{\text{odd}}\rho(\x)
\end{cases}.
\end{equation*}

\section{Correspondence between CHN and HAM}
\label{appendix_correspondence_CHN_HAM}
As outlined in \cref{section_preliminaries}, the distinction between a CHN and a HAM is conventionally made on the basis of the energy functions.
However, via the DEQ formulations of \cref{section_from_hopfield_to_deq}, we can demonstrate a correspondence between these two models, as formalized in \theoremref{theorem_equivalence_CHN_HAM}. This appendix gradually builds up towards the theorem's proof.

\begin{definition}
    In a well-behaved CHN, $\s^*$ is always unique and not identically zero.
\end{definition}
\begin{lemma}\label{lemma_CHN_single_rhod_nonzero}
    A CHN with an element-wise $\rho$ can only be well-behaved if $\rho'(0) \,{\neq}\, 0$.
\end{lemma}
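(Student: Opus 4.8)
The plan is to argue by contradiction: assume $\rho'(0) = 0$ and show that the CHN's DEQ then forces $\s^* = \mathbf{0}$ for every input, which is incompatible with the definition of a well-behaved CHN.

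First I would write down the input-explicit DEQ formulation of the CHN from \cref{section_from_hopfield_to_deq}, namely $\s^* = \rho'(\s^*) \odot (\W\rho(\s^*) + \vec{b} + \U\rho(\x))$ (dropping tildes, as in the paper). Since $\rho$ is applied element-wise, so is $\rho'$, and hence $\rho'(\mathbf{0})$ is the constant vector whose every entry equals the scalar $\rho'(0)$. Under the assumption $\rho'(0) = 0$ this vector is $\mathbf{0}$. Substituting $\s^* = \mathbf{0}$ into the right-hand side then gives $\mathbf{0} \odot (\W\rho(\mathbf{0}) + \vec{b} + \U\rho(\x)) = \mathbf{0}$, because the bracketed factor is a finite vector; so $\mathbf{0}$ satisfies the fixed-point equation. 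Crucially it does so \emph{for every} input $\x$, since the input enters only through that finite factor, which is annihilated by the Hadamard product regardless of its value.

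Finally I would invoke well-behavedness: for each $\x$ the fixed point $\s^*$ is assumed unique, so $\mathbf{0}$ must be that unique fixed point for all $\x$, i.e.\ $\s^*$ is identically zero; this contradicts the requirement in the definition that $\s^*$ not be identically zero. Therefore $\rho'(0) \neq 0$. There is no real computational obstacle here; the one point I would be careful to state precisely is the reading of ``identically zero'' (zero as a function of the input), and the correct handling of the implicit-then-explicit input dependence, so that it is clear the zero fixed point obtained above is genuinely input-independent and the contradiction is airtight.
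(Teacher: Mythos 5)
Your argument is correct and essentially identical to the paper's own proof, which also observes that $\rho'(0)=0$ forces $\s^*=\mathbf{0}$ to be an equilibrium for every input and then concludes that the CHN is either not unique or identically zero, hence not well-behaved. The only difference is presentational (you phrase it as a contradiction with the uniqueness clause applied first; the paper states it as a contraposition with a disjunction), and your extra care about the input-independence of the zero fixed point is a welcome clarification rather than a deviation.
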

\vspace{-8pt}
\begin{proof}
    (Contraposition) If $\rho'(0) \,{=}\, 0$, then $\s^* \,{=}\, \mathbf{0}$ is always an equilibrium state of the CHN. Thus, $\s^*$ is either identically zero or not unique, and the CHN is not well-behaved.
\end{proof}
\vspace{-8pt}
\begin{lemma}\label{lemma_CHN_all_rhod_nonzero}
    In a well-behaved CHN with an element-wise $\rho$,  $\rho'(\s^*)$ contains no zeros.
\end{lemma}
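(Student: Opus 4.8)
The plan is to run a short proof by contradiction directly off the fixed-point equation. Working with the input-separated CHN-as-DEQ of \cref{explicit_naive_chn_deq} (with the tilde suppressed, so that $\s^*$ denotes the hidden state), the observation that makes everything work is that its right-hand side carries an overall element-wise factor $\rho'(\s^*)$; reading it coordinate by coordinate, $s^*_i = \rho'(s^*_i)\cdot\bigl(\W\rho(\s^*) + \vec{b} + \U\rho(\x)\bigr)_i$ for every hidden coordinate $i$.

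Suppose, for contradiction, that $\rho'(\s^*)$ has a vanishing entry, say $\rho'(s^*_i) = 0$ for some coordinate $i$. Then the $i$-th scalar equation above immediately forces $s^*_i = 0$, regardless of the value of the bracketed term. Consequently $\rho'(s^*_i) = \rho'(0)$, but Lemma~\ref{lemma_CHN_single_rhod_nonzero} tells us that a well-behaved CHN must satisfy $\rho'(0) \neq 0$, contradicting $\rho'(s^*_i) = 0$. Hence no coordinate of $\rho'(\s^*)$ can vanish, which is exactly the claim.

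I do not anticipate a real obstacle; the only point that needs care is the bookkeeping around the input layer. The relation $\s = \rho'(\s)\odot(\cdots)$ of \cref{implicit_naive_chn_deq} holds coordinate-wise only on the hidden states — the first $d$ coordinates being clamped to the static input $\x$ — so both the statement and the proof are to be understood as concerning $\rho'(\tilde{\s}^*)$ on the hidden part, matching \cref{explicit_naive_chn_deq}. It is also worth making explicit that well-behavedness enters the argument only through Lemma~\ref{lemma_CHN_single_rhod_nonzero}, so everything is self-contained given that lemma. Finally, I would remark that this non-vanishing is precisely what is required for the Hadamard division $\varsigma(\s) := \s \oslash \rho'(\s)$ to be well-defined at $\s = \s^*$, as used in \theoremref{theorem_equivalence_CHN_HAM}.
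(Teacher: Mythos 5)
Your proposal is correct and follows essentially the same route as the paper: assume $\rho'(s_i^*)=0$ at some coordinate, note that the $i$-th component of \cref{explicit_naive_chn_deq} then forces $s_i^*=0$, and invoke Lemma~\ref{lemma_CHN_single_rhod_nonzero} ($\rho'(0)\neq 0$ in a well-behaved CHN) to obtain a contradiction. The only difference is presentational — you phrase the clash as $\rho'(s_i^*)=\rho'(0)\neq 0$ versus the paper's contrapositive $\rho'(s_i^*)=0\implies s_i^*\neq 0$ — and your remarks on the hidden-state bookkeeping and on why the lemma is needed for the Hadamard division in \theoremref{theorem_equivalence_CHN_HAM} are accurate but do not change the argument.
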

\vspace{-8pt}
\begin{proof}
    \parencite[from][]{bengio_fischer_CHN_original} Consider the $i$-th state $s_i^*$ and assume $\rho'(s_i^*) \,{=}\, 0$. Then, in the $i$-th equation of \cref{explicit_naive_chn_deq}, the right-hand side equals 0, reducing the whole to $s_i^* \,{=}\, 0$. However, by \lemmaref{lemma_CHN_single_rhod_nonzero}, we know that in a well-behaved CHN $\rho'(s_i^*) \,{=}\, 0 \implies s_i^* \,{\neq}\, 0$. This is a contradiction.
\end{proof}
\vspace{-8pt}
\begin{restatetheorem}[restated]
    Under relatively mild conditions for $\rho$ and up to an input preprocessing step, a well-behaved CHN can be transformed into a functionally equivalent HAM with effective non-linearity $\rho_\varsigma := \rho\circ \varsigma^{-1}$, where $\varsigma(\s) \,{:=}\, \s \,{\oslash}\, \rho'(\s)$, with ${\oslash}$ representing the Hadamard division.
\end{restatetheorem}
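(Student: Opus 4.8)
The plan is to divide the $\rho'$-term out of the CHN's DEQ and absorb it into a change of variables, turning the equation into that of a HAM. By \lemmaref{lemma_CHN_all_rhod_nonzero}, $\rho'(\s^*)$ has no zero entries in a well-behaved CHN, so we may divide \cref{explicit_naive_chn_deq} element-wise by $\rho'(\s^*)$ to obtain
\begin{equation*}
\varsigma(\s^*) \;=\; \W\rho(\s^*) + \vec{b} + \U\rho(\x), \qquad \varsigma(\s) := \s \oslash \rho'(\s).
\end{equation*}
The right-hand side is already of exactly HAM form (cf.\ \cref{explicit_naive_ham_deq}); the entire proof is about removing the $\varsigma$ on the left.

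First I would state precisely the ``relatively mild conditions'' on $\rho$: it suffices that $\rho$ be $C^1$ with $\rho'$ nowhere vanishing (hence of constant sign) and that the scalar map $s \mapsto s/\rho'(s)$ be strictly monotone, so that $\varsigma$ --- which, like $\rho$, acts element-wise --- is a bijection from $\mathbb{R}^{N}$ onto its range. I would include a short verification that this holds for the canonical sigmoid $\rho = \sigma$: there $\rho''(s) = \rho'(s)\bigl(1 - 2\sigma(s)\bigr) = -\rho'(s)\tanh(s/2)$, so $s\,\rho''(s) \le 0$ and $\varsigma'(s) = \bigl(\rho'(s) - s\,\rho''(s)\bigr)/\rho'(s)^2 > 0$ for all $s$, making $\varsigma$ a strictly increasing bijection of $\mathbb{R}$ that fixes the origin (using $\rho'(0)\neq 0$ from \lemmaref{lemma_CHN_single_rhod_nonzero}).

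Granting invertibility of $\varsigma$, I would set $\vec{u}^* := \varsigma(\s^*)$, equivalently $\s^* = \varsigma^{-1}(\vec{u}^*)$, and define $\rho_\varsigma := \rho \circ \varsigma^{-1}$, so that $\rho(\s^*) = \rho_\varsigma(\vec{u}^*)$. Substituting into the displayed identity gives
\begin{equation*}
\vec{u}^* \;=\; \W\rho_\varsigma(\vec{u}^*) + \vec{b} + \U\rho(\x),
\end{equation*}
which is \cref{explicit_naive_ham_deq} for a HAM with non-linearity $\rho_\varsigma$, the \emph{same} weight matrix $\W$ (still with vanishing diagonal blocks) and the same bias $\vec{b}$ --- the only mismatch being that such a HAM ought to feed its input through its own non-linearity, i.e.\ as $\U\rho_\varsigma(\x')$. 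That mismatch is precisely the announced \emph{input preprocessing step}: taking $\x' := \varsigma(\x)$ yields $\rho_\varsigma(\x') = \rho\bigl(\varsigma^{-1}(\varsigma(\x))\bigr) = \rho(\x)$, so after preprocessing the equation above is verbatim the DEQ of a HAM. To conclude, I would make ``functionally equivalent'' precise: the equilibrium activations agree, $\rho_\varsigma(\vec{u}^*) = \rho(\s^*)$ (and a readout only ever consumes these), while the CHN state itself is recovered bijectively via $\s^* = \varsigma^{-1}(\vec{u}^*)$; moreover uniqueness and non-nullity of $\s^*$ transfer to $\vec{u}^*$ because $\varsigma$ is a bijection fixing $0$.

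The main obstacle is the invertibility of $\varsigma$. Unlike $\rho$, which one is free to pick as a simple function, $\varsigma$ is \emph{derived} from $\rho'$, so the delicate point is isolating clean sufficient hypotheses on $\rho$ under which $s/\rho'(s)$ is globally one-to-one, together with identifying its range (which pins down the admissible preprocessed inputs $\x'$). A secondary subtlety I would flag is that the HAM's fixed-point \emph{iteration} may leave $\operatorname{range}(\varsigma)$, where $\varsigma^{-1}$ --- and hence $\rho_\varsigma$ --- is a priori undefined; since a Hopfield network is specified by its equilibrium rather than its trajectory, it is enough to establish the correspondence at the level of fixed points, and, if one insists on a globally defined iteration, to extend $\rho_\varsigma$ continuously outside $\operatorname{range}(\varsigma)$ in a way that neither creates nor destroys equilibria.
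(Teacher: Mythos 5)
Your proposal is correct and follows essentially the same route as the paper's proof in \cref{appendix_correspondence_CHN_HAM}: divide \cref{explicit_naive_chn_deq} by $\rho'(\s^*)$ (licensed by \lemmaref{lemma_CHN_all_rhod_nonzero}), change variables via the bijection $\varsigma$, and absorb the residual input mismatch into the preprocessing $\x \mapsto \varsigma(\x)$ so that $\rho_\varsigma(\varsigma(\x)) = \rho(\x)$. Your additions --- explicit sufficient conditions on $\rho$, the monotonicity check for the sigmoid, and the caveat about iterates leaving $\operatorname{range}(\varsigma)$ --- go slightly beyond what the paper states but do not change the argument.
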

\begin{proof}
For a well-behaved CHN, \lemmaref{lemma_CHN_all_rhod_nonzero} allows us to rewrite the DEQ of \cref{explicit_naive_chn_deq} as
\begin{equation*}
    \varsigma(\s^*) = \W\rho(\s^*) + \vec{b} + \U\rho(\x)
    ,
\end{equation*}
where we have introduced $\varsigma(\s^*) \,{:=}\, \s^* \,{\oslash}\, \rho'(\s^*)$, with ${\oslash}$ representing the Hadamard (element-wise) division.
Furthermore, if $\rho$ is chosen such that $\varsigma$ is bijective (as is the case for most common choices of $\rho$), we may introduce a change of variables $\s_\varsigma^* := \varsigma(\s^*)$ and rearrange the DEQ to
\begin{equation*}
    \s_\varsigma^* = \W \rho(\varsigma^{-1}(\s_\varsigma^*)) + \vec{b} + \U\rho(\x)
    .
\end{equation*}
Under the substitution $\rho_\varsigma := \rho\circ \varsigma^{-1}$, we find
\begin{equation}
    \s_\varsigma^* = \W \rho_\varsigma(\s_\varsigma^*)) + \vec{b} + \U\rho_\varsigma(\varsigma(\x))
    \label{equation_chn_as_ham},
\end{equation}
which coincides exactly with \cref{explicit_naive_ham_deq}, the DEQ of a HAM, 
with non-linearity $\rho_\varsigma$ and input preprocessing using $\varsigma$.
\end{proof}
While \cref{equation_chn_as_ham} has direct access to the inverted function $\varsigma^{-1}$, a CHN does not. Instead, it uses its fixed point structure to approximate this inverse function during inference. Specifically, for a single state $s^*_i$, we have $s^*_i = \rho'(s^*_i) \cdot C_i$, where $C_i$ is a constant depending on the value of all other states, which we assume are kept fixed here. Depending on $\varsigma$, this fixed point equation may converge very slowly, therefore requiring many iterations. If an efficient implementation of $\varsigma^{-1}$ would be available, it would almost always be better to directly use \cref{equation_chn_as_ham} instead of \cref{implicit_naive_chn_deq}.
Conversely, certain activation functions may lead to a stable CHN, despite not resulting in a bijective $\varsigma$. Nevertheless, for some of these CHNs, an equivalent HAM may still be formulated.

\begin{example}
A CHN with $\rho \,{=}\, \text{ReLU}$, analytically extended such that $\rho'(0) \,{=}\, 1$, would result in a non-bijective $\varsigma$. Nonetheless, it can quickly be recognized as a HAM with the same non-linearity, but where the states are limited to be strictly non-negative. Moreover, the HAM is linear, and an analytic expression for the equilibrium may be found. By contrast, it is not clear how stable a naive implementation of this CHN would be, and using the HAM counterpart would likely be the better choice.
\end{example}

\section{Redundancy of synchronous updates}
\label{appendix_redundancy_of_synchronous_updates}
The state substitution in \cref{even_odd_deq_update_rule} reveals an interesting phenomenon arising in synchronously updated HAMs.
First, it is important to reiterate that, as mentioned in \cref{appendix_advantages_deq_framework}, iteratively applying \cref{eo_transform_chn_deq_update,eo_transform_ham_deq_update} is equivalent to minimizing the energy $E$ from \cref{energy_function_HAM} by solving the ODE of \cref{HAM_state_update_rule} using the forward Euler method with synchronous state updates and a time step size equal to 1.
As illustrated in \cref{figure_2deqs_in_hopfield}, this scenario corresponds exactly to simultaneously solving two DEQs of the form of \cref{even_odd_deq}, one at time step $n$ (solid), the other at $n+1$ (dashed).
In other words, synchronously updating the states corresponds to solving two internal DEQs with independent state dynamics.

This redundancy is not beneficial.
Below, we discuss two problems that arise in this case and describe how even-odd splitting avoids them.
As an alternative solution, one may also turn to a particular state initialization, which induces dynamics identical to even-odd splitting, albeit at a lower computational efficiency.

\paragraph{Problem \#1: Lack of convergence guarantees}\newline
\vspace{-11pt}
\setlength{\columnsep}{14pt}
\begin{wrapfigure}{r}{0.5\textwidth}
\vspace{-8pt}
\centering
\includeinkscape[width=0.95\linewidth, pretex=\footnotesize]{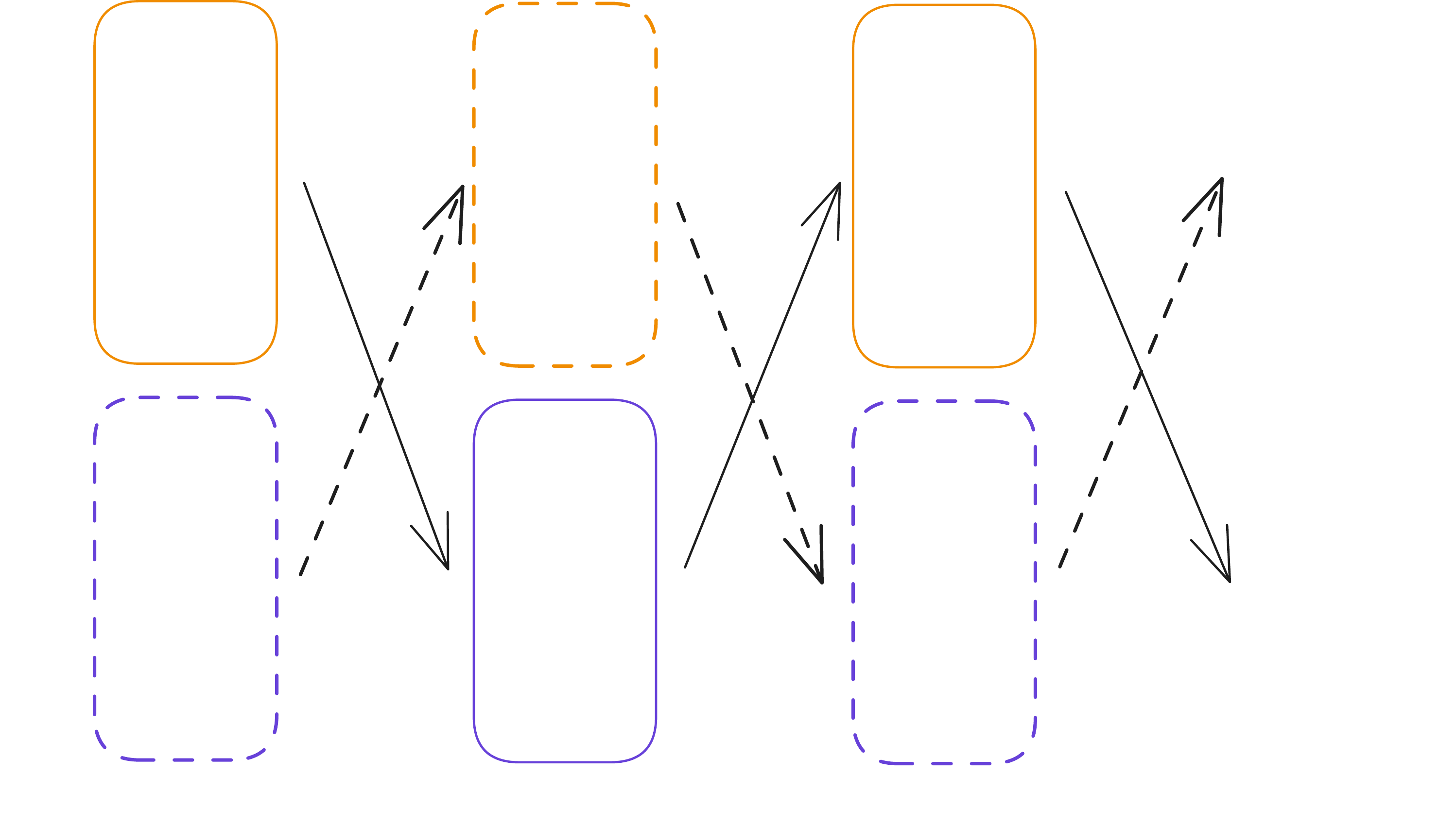_tex}
\caption{A view of synchronous updates across time reveals two separate even-odd DEQs (solid \& dashed)}
\label{figure_2deqs_in_hopfield}
\vspace{-12pt}
\end{wrapfigure}
\cref{figure_2deqs_in_hopfield} nicely illustrates how state convergence under synchronous updates can only be guaranteed over two time steps (i.e., a length-2 limit cycle exists), as has long been known for Hopfield networks \parencite{koiran1994dynamics,wang1998_2cycle_convergence}.
Absolute convergence can only be achieved when both the solid and the dashed DEQ converge to the same equilibrium point. This may not always be the case, as it depends on both the input $\x$ and the specifics of the DEQ, such as state initialization, parameter values, and choice of non-linearity $\rho$.

\vspace{10pt}
However, this behavior can easily be guaranteed by simply iterating a single DEQ (e.g., the solid one in \cref{figure_2deqs_in_hopfield}) and defining the second DEQ as a time-shifted copy of the first one.
This is precisely what even-odd splitting does and what \cref{even_odd_deq} describes.

\paragraph{Problem \#2: Partial gradient flow}\newline
In practice, the total number of iterations is typically kept fixed and determined in advance.
In the case of \cref{figure_2deqs_in_hopfield}, this means that whichever DEQ contains the output prediction $\hat{\y}$ after the final iteration at time $T$, will always be the one receiving the gradient, which can then backpropagate through time, either explicitly or using memory-efficient DEQ methods.
The other DEQ will not receive any gradient information whatsoever, whereas its parameters, which are shared with the first DEQ, are still updated.

Since the gradients depend on only a single DEQ, the parameters will adapt to the initial behavior of that DEQ, which may not be optimal for the other one.
This might be problematic if the model is ever to run for a different number of iterations, with a different polarity.
In \cref{even_odd_deq}, this is never a problem.
As the DEQ internally advances two time steps at once, the final time $T$ will always be even. Additionally, it never explicitly models the other DEQ, whose behavior at initialization is therefore irrelevant.

\paragraph{Inducing even-odd splitting through state initialization}\newline
State initialization plays an important role in guaranteeing convergence.
To illustrate what might go wrong, let us assume that $n\,{=}\,0$ in \cref{figure_2deqs_in_hopfield} and that $\s^{0}_\text{even}$ and $\s^{0}_\text{odd}$ are initialized at zero, as is commonly done.
In the solid DEQ, $\s^{1}_\text{odd}$ receives information from both the input $\x$ and $\s^{0}_\text{even}$, and updates its states accordingly.
However, in the dashed DEQ, $\s^{1}_\text{even}$ only receives information from $\s^{0}_\text{odd}$, which is not input-dependent, and uses that to update its states.
This means that the dashed DEQ is not exactly a time-shifted version of the solid DEQ anymore: its initialization for $\s_\text{even}$ will be $\s^{1}_\text{even}$, which does not necessarily equal $\s^{0}_\text{even}$.

To avoid this discrepancy, we may design an initialization scheme such that $\s^{1}_\text{even} = \s^{0}_\text{even}$ by construction. Setting
\begin{equation*}
   \s^{0}_\text{even} = \vec{W_P^T}\rho(\s^{0}_\text{odd}) + \vec{b}_\text{even}
\end{equation*}
guarantees this equality, as can be seen from \cref{eo_transform_ham_deq_update}.
In fact, this initialization scheme directly induces even-odd splitting in synchronously updated HAMs. After all, we find that
\begin{equation*}
   \s^{0}_\text{even} = \s^{1}_\text{even} \Longrightarrow \s^{1}_\text{odd} = \s^{2}_\text{odd} \Longrightarrow \s^{2}_\text{even} = \s^{3}_\text{even} \Longrightarrow \ldots
   ,
\end{equation*}
which corresponds exactly to even-odd splitting, where the even/odd layers are alternately kept fixed for a single time step. Note, however, that synchronous updates still waste computations on these fixed values, making \cref{even_odd_deq_update_rule} the more sensible update rule to follow.

\section{Experimental setup}
\label{appendix_experimental_setup}
Below is an overview that should contain all information required to reproduce the results from \cref{section_results}.
The code is available at \url{https://github.com/cgoemaere/hopdeq}.

\begin{itemize}
    \item[] \textbf{Data}
    \item Dataset: EMNIST-MNIST \parencite{cohen2017emnist}. This is a drop-in replacement for the MNIST dataset \parencite{lecun1998mnist}, but with a known conversion process from the original NIST digits \parencite{grother1995nist}.
    \item Input preprocessing: rescaling pixel intensities from [0, 255] to [0, 1]
    \item Batch size: 64
    \item Epochs: 10
    \item No data augmentation
    \\
    \item[] \textbf{Model}
    \item Architecture (with a constant amount of hidden neurons)
    \begin{itemize}
        \item 3 layers: 784-1990-10
        \item 5 layers: 784-1280-510-200-10
        \item 7 layers: 784-1024-512-256-128-70-10
    \end{itemize} 
    \item Non-linearity $\rho$: sigmoid($4x-2$) (shifted sigmoid; same as \textcite{laborieuxScalingEquilibriumPropagation2020})
    \item State initialization: zero initialization, i.e., $\s^{n=0} = \mathbf{0}$
    \item Weight initialization: Xavier initialization \parencite{xavier_initialization} per layer (not on full $\W$, but on $\W_i$), as we want bidirectional operation between layers. The biases were initialized using a normal distribution with mean 0.0 and standard deviation 0.01.
    \item Forward iterations (chosen large enough to ensure state convergence during training): 40 (3 layers), 80 (5 layers), 120 (7 layers)
    \item DEQ solver: Anderson acceleration with windows size $m \,{=}\, 4$, Tikhonov regularization (constant: $10^{-10}$), and safe-guarding
    \item Damping (tuned to maintain stability during training)
    \begin{itemize}
        \item CHN: damping of 0.5, i.e., if the DEQ is $\s^* = f(\s^*)$, then we use $\s^{n+1} = 0.5\s^n + 0.5f(\s^n)$ as update rule. From an ODE perspective, this means that time moves half as fast (i.e., step size $h \,{=}\, 0.5$). To compensate for that, we multiply the provided number of forward and backward iterations with a factor 2, so that the same amount of ODE time is simulated.
        \item CHN-EO \& HAM: no damping, i.e., if the DEQ is $\s^* = f(\s^*)$, then we use $\s^{n+1} = f(\s^n)$ as update rule.
    \end{itemize}
    \item[] \textbf{Training}
    \item Loss function: Mean Square Error
    \item Backward method: Recurrent Backpropagation \parencite{almeida1987recurrentbackprop,pineda1987recurrentbackprop} with Picard iteration (always; Anderson acceleration was unstable here)
    \item Backward iterations: 8 (3 layers), 16 (5 layers), 24 (7 layers)
    \item Optimizer
    \begin{itemize}
        \item Type: Madam \parencite{bernstein2020madam} (chosen as a substitute for layerwise learning rates; Madam automatically scales weight updates according to $||\Delta W||/||W||$, as advised by \textcite{scellierEquilibriumPropagationBridging2017a})
        \item Learning rate: 0.01 (3 layers), 0.005 (5 layers \& 7 layers)
        \item Learning rate decay: linear decay to $1{/}10$\textsuperscript{th} of the initial learning rate mentioned above, over the course of the 10 epochs \parencite[inspired by][]{bernstein2020madam}
        \item Hyperparameters (see \href{https://github.com/jxbz/madam/blob/master/pytorch/optim/madam.py}{implementation}): p\_scale = 1024; g\_bound = 3
    \end{itemize}
    \item No gradient clipping, dropout or other commonly used training techniques
    \item GPU: 1x GTX-1080Ti
\end{itemize}

\clearpage
\section{Visual comparison of state dynamics in different configurations of Hopfield networks}
\label{appendix_visual_state_dynamics}
In \cref{figure_state_dynamics_3layers_CHN_HAM,figure_state_dynamics_5-7layers}, we provide a visual comparison of the state dynamics in the different models from \cref{section_results}. First, notice how the use of DEQ solvers helps guarantee convergence in samples that would otherwise not have converged. Additionally, even-odd splitting seems to boost convergence speed overall, by a factor close to two, as expected. We can see that the initial dynamics of the models differ from their regular regime, as the trajectories of all samples start out similarly and only diverge after a few iterations. As for the low density region in the models using DEQ solvers (most noticeable in the bottom right subplots), we hypothesize that this is due to the solver occasionally finding the exact fixed point solution, bringing the relative residual to zero.
\vspace{10pt}

\begin{figure}[!ht]
    \centering
    \includegraphics[width=.49\linewidth]{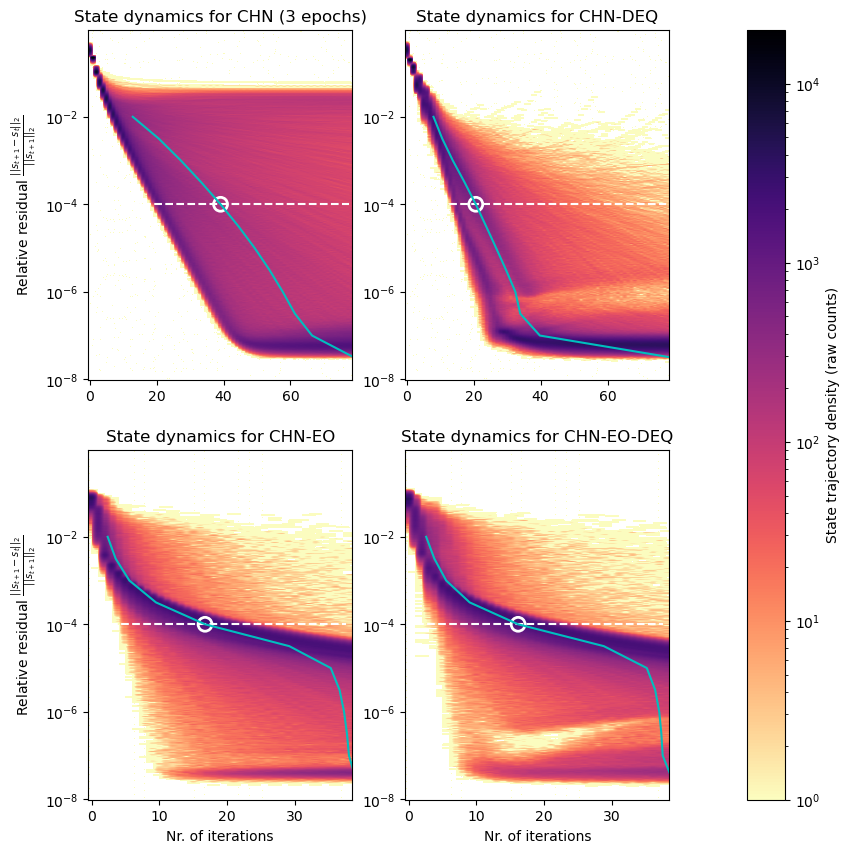}
    \hfill
    \includegraphics[width=.49\linewidth]{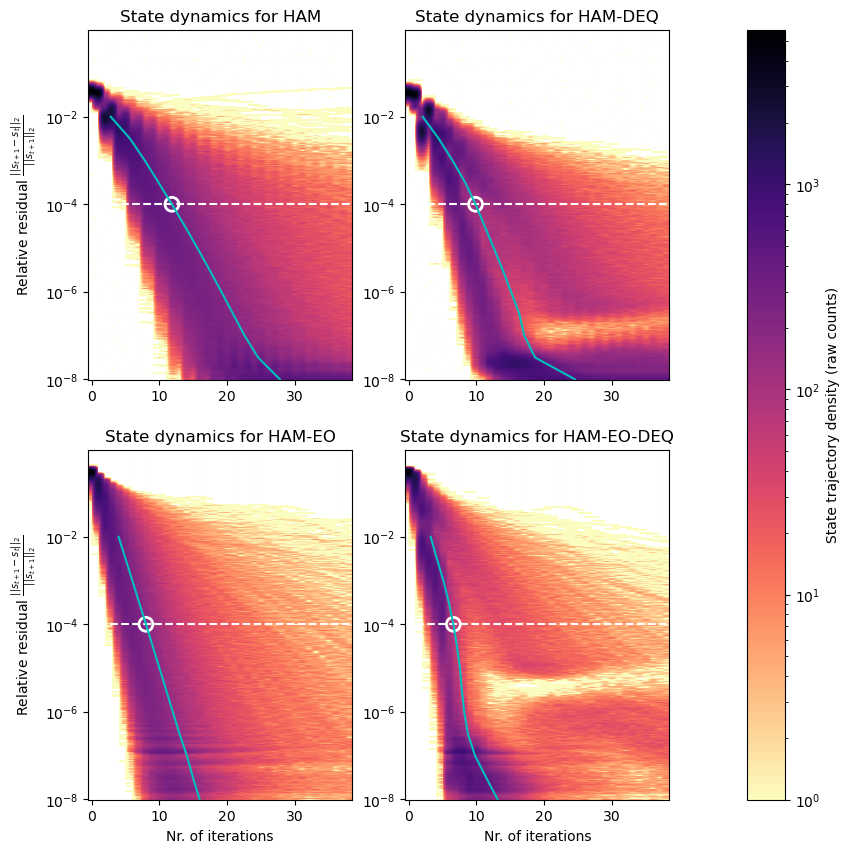}
    \caption{Density heatmap of the state trajectories for a 3-layer CHN (left) and HAM (right), and the impact of using DEQ solvers (`DEQ') and even-odd splitting (`EO'). The horizontal axis represents the number of iterations of the DEQ. The vertical axis represents the relative residual, which is used to determine the state convergence (the lower, the more converged). The limit of $10^{-4}$ as chosen criterion for convergence is indicated with a white dashed line. For every setting, we show the cumulative results of 5 different seeds, run on the entire MNIST test set. In cyan, we show the mean number of iterations corresponding to a given convergence criterion. The white circular marker at the limit of $10^{-4}$ corresponds to the value reported in \cref{table_results}.}
    \label{figure_state_dynamics_3layers_CHN_HAM}
\end{figure}

\begin{figure}[!ht]
    \centering
    \includegraphics[width=.49\linewidth]{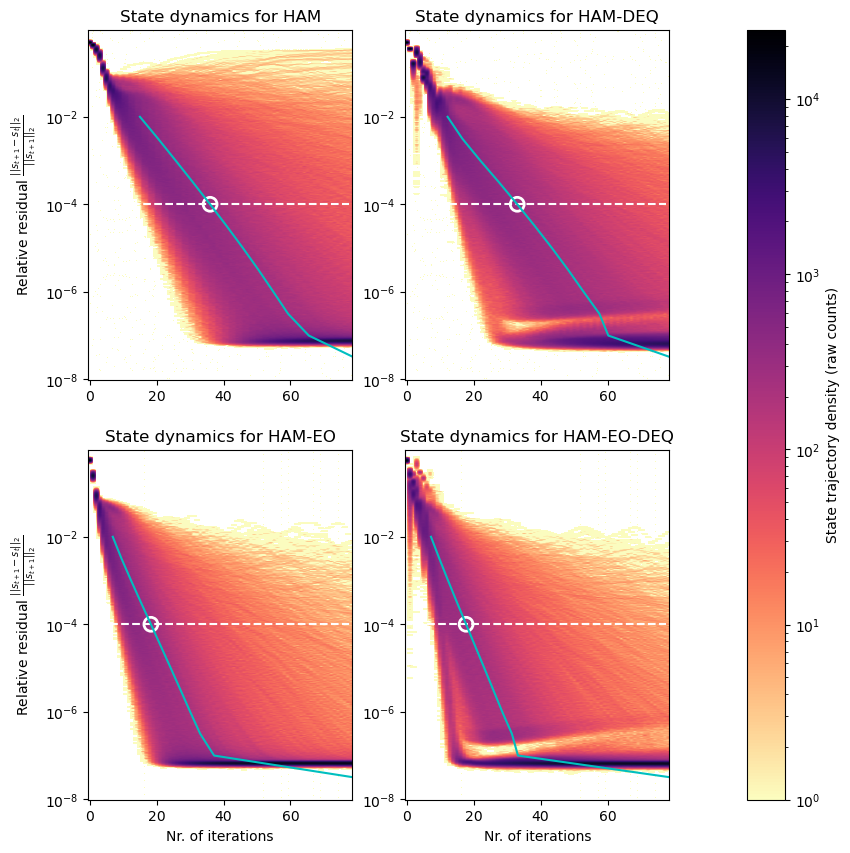}
    \includegraphics[width=.49\linewidth]{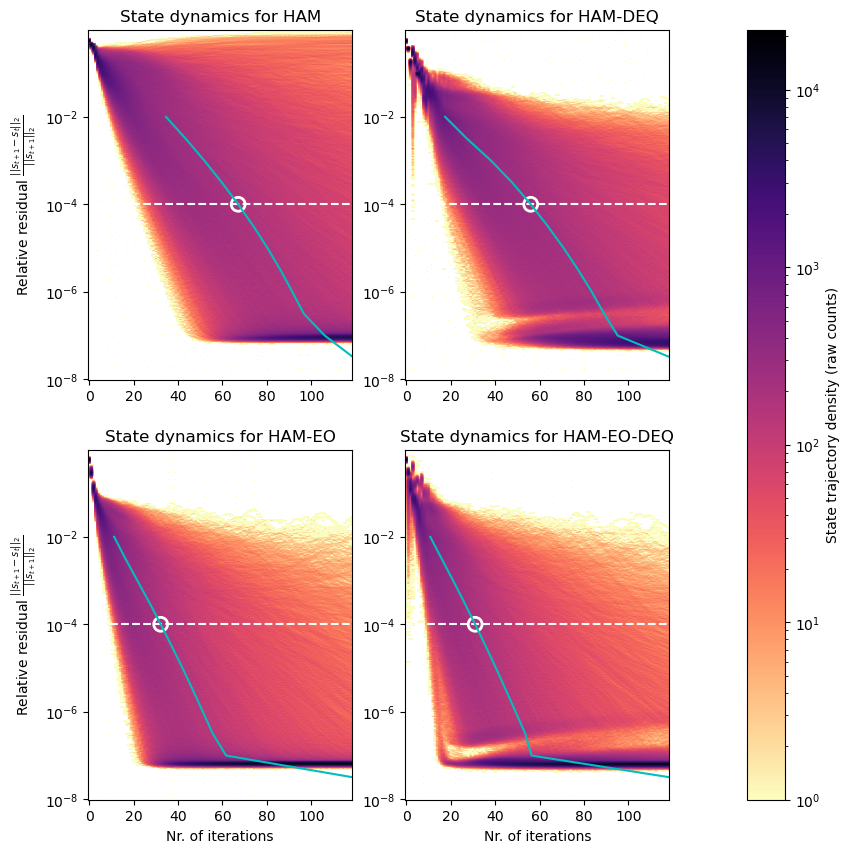}
    \caption{Density heatmap of the state trajectories for a 5-layer (left) and 7-layer HAM (right), and the impact of using DEQ solvers (`DEQ') and even-odd splitting (`EO'). The horizontal axis represents the number of iterations of the DEQ. The vertical axis represents the relative residual, which is used to determine the state convergence (the lower, the more converged). The limit of $10^{-4}$ as chosen criterion for convergence is indicated with a white dashed line. For every setting, we show the cumulative results of 5 different seeds, run on the entire MNIST test set. In cyan, we show the mean number of iterations corresponding to a given convergence criterion. The white circular marker at the limit of $10^{-4}$ corresponds to the value reported in \cref{table_results}.}
    \label{figure_state_dynamics_5-7layers}
\end{figure}

\end{document}